\PassOptionsToPackage{table,xcdraw,dvipsnames}{xcolor}
\documentclass[sigconf]{acmart}
\AtBeginDocument{%
  }


\copyrightyear{2026}
\acmYear{2026}
\setcopyright{cc}
\setcctype{by}
\acmConference[KDD 2026] {Proceedings of the 32nd ACM SIGKDD Conference on Knowledge Discovery and Data Mining V.1}{August 9--13, 2025}{Jeju Island, Republic of Korea.}
\acmBooktitle{Proceedings of the 32nd ACM SIGKDD Conference on Knowledge Discovery and Data Mining V.1 (KDD 2026), August 9--13, 2025, Jeju Island, Republic of Korea}
\acmPrice{}
\acmDOI{10.1145/3770854.3780263}
\acmISBN{979-8-4007-2258-5/2026/08}
\settopmatter{printacmref=true}

\usepackage{xcolor}
\usepackage{colortbl} 
\usepackage{url}

\usepackage{algorithmic}
\usepackage[ruled,vlined]{algorithm2e}
\usepackage[shortlabels]{enumitem}
\setlist[enumerate]{nosep}

\usepackage{amsmath}
\usepackage{mathtools}
\usepackage{amsthm}
\usepackage{bm}
\usepackage{mathrsfs}
\usepackage[capitalize,noabbrev]{cleveref}
\usepackage{multirow}
\usepackage{booktabs} 

\theoremstyle{plain}

\theoremstyle{definition}
\newtheorem{definition}{Definition}[section]

\theoremstyle{remark}

\usepackage{changepage}
\definecolor{myhighlight}{RGB}{220,240,255}
\usepackage{multicol}
\SetKwInput{KwInput}{Input}
\SetKwInput{KwOutput}{Output}
\usepackage{tcolorbox}
\usepackage{caption}
\usepackage{thm-restate}
\usepackage{wrapfig}
\usepackage{float}

\tcbset{
  takeaway/.style={
    colback=gray!10!white,
    colframe=gray!60!black,
    fonttitle=\bfseries,
    coltitle=white,
    boxrule=1pt,
    arc=4pt,
    left=6pt,
    right=6pt,
    top=6pt,
    bottom=6pt,
    title=Takeaway,
  }
}
\tcbset{
  example/.style={
    colback=blue!10!white,  
    colframe=blue!60!black, 
    fonttitle=\bfseries,
    coltitle=white,
    boxrule=1pt,
    arc=4pt,
    left=6pt,
    right=6pt,
    top=6pt,
    bottom=6pt,
    title=Example,
  }
}




\begin{document}

\title{Can Prompt Difficulty be Online Predicted for Accelerating RL Finetuning of Reasoning Models?}


\author{Yun Qu}
\orcid{0009-0000-1803-8435}
\affiliation{%
  \institution{Tsinghua University}
  \city{Beijing}
  \country{China}
}

\author{Qi Wang}
\authornote{Correspondence: cheemswang@mail.tsinghua.edu.cn, xyji@tsinghua.edu.cn}
\orcid{0009-0003-7758-725X}
\affiliation{%
  \institution{Tsinghua University}
  \city{Beijing}
  \country{China}
}

\author{Yixiu Mao}
\orcid{0009-0000-7302-5039}
\affiliation{%
  \institution{Tsinghua University}
  \city{Beijing}
  \country{China}
}

\author{Vincent Tao Hu}
\orcid{0000-0003-1561-3216}
\affiliation{%
  \institution{CompVis @ LMU Munich, Munich Center for Machine Learning (MCML)}
  \city{Munich}
  \country{Germany}
}

\author{Bj\"{o}rn Ommer}
\orcid{0000-0003-0766-120X}
\affiliation{%
  \institution{CompVis @ LMU Munich, Munich Center for Machine Learning (MCML)}
  \city{Munich}
  \country{Germany}
}

\author{Xiangyang Ji}
\authornotemark[1]
\orcid{0000-0001-9542-5260}
\affiliation{%
  \institution{Tsinghua University}
  \city{Beijing}
  \country{China}
}

\renewcommand{\shortauthors}{Qu et al.}


\begin{abstract}
Recent advances have witnessed the effectiveness of reinforcement learning (RL) finetuning in enhancing the reasoning capabilities of large language models (LLMs). 
The optimization process often requires numerous iterations to achieve satisfactory performance, resulting in high computational costs due to the need for frequent prompt evaluations under intensive LLM interactions and repeated policy updates.
Appropriate online prompt selection methods reduce iteration steps by prioritizing informative prompts during training, while the pipeline's reliance on exhaustive prompt evaluation and subset selection for optimization still incurs substantial computational overhead due to frequent LLM inference calls.
Distinguished from these direct evaluate-then-select schemes, this work investigates iterative approximate evaluation for arbitrary prompts and introduces Model Predictive Prompt Selection (MoPPS), a Bayesian risk-predictive framework that online estimates prompt difficulty without requiring costly LLM interactions.
Technically, MoPPS models each prompt's success rate as a latent variable, performs streaming Bayesian inference, and employs posterior sampling in a constructed multi-armed bandit machine, enabling efficient and adaptive prompt selection.
Extensive experiments across mathematics, planning, and vision-based geometry tasks show that MoPPS reliably predicts prompt difficulty and accelerates training with significantly reduced LLM rollouts.
Our code is available at \url{https://github.com/thu-rllab/MoPPS}.
\end{abstract}

\begin{CCSXML}
<ccs2012>
   <concept>
       <concept_id>10010147.10010178</concept_id>
       <concept_desc>Computing methodologies~Artificial intelligence</concept_desc>
       <concept_significance>500</concept_significance>
       </concept>
 </ccs2012>
\end{CCSXML}

\ccsdesc[500]{Computing methodologies~Artificial intelligence}

\keywords{Large Language Model, Reinforcement Learning, Reasoning Model, Online Prompt Selection, Active Data Sampling}


\maketitle

\section{Introduction}\label{sec:intro}

Reinforcement learning (RL) finetuning has become a prominent method for enhancing capabilities of large language models (LLMs)  \citep{guo2025deepseek, team2025kimi,jaech2024openai,huang2025foundation}, leading to notable reasoning improvements in the presence of complicated tasks such as mathematical problem solving~\citep{luo2025deepscaler} and code generation~\citep{luo2025deepcoder}.
Despite its effectiveness, RL finetuning of LLMs is widely known to be expensive in computations and memory usage during inference calls, as it requires intensive rollouts for policy evaluation and updates in LLMs~\citep{zheng2025act, lin2025cppo}.

\textbf{Online Prompt Selection Matters in RL Finetuning:} 
In RL finetuning of LLMs, random sampling from the prompt dataset is common for chain-of-thought generation and policy optimization. 
However, it often fails to capture informative prompts and suffers from inefficiency and redundancy, while token generation itself is resource-intensive~\citep{zheng2025act}.
Recent work emphasizes data quality~\citep{guo2025deepseek,grattafiori2024llama3} and explores online prompt selection~\citep{cui2025process,yang2024qwen2math,yu2025dapo,bae2025online,meng2025mm,zhang2025srpo,xiong2025minimalist}, where training batches are curated by prioritizing prompts based on quality or difficulty~\citep{cui2025process,yu2025dapo,meng2025mm,bae2025online}.
While these methods improve performance and even accelerate training, evaluating prompt difficulty across large candidate pools still incurs heavy computational overhead~\citep{chen2025self}.

\setlength{\columnsep}{10pt}
\begin{wrapfigure}{r}{0.25\textwidth}
\centering
\vspace{-15pt}
\includegraphics[width=1.0\linewidth]{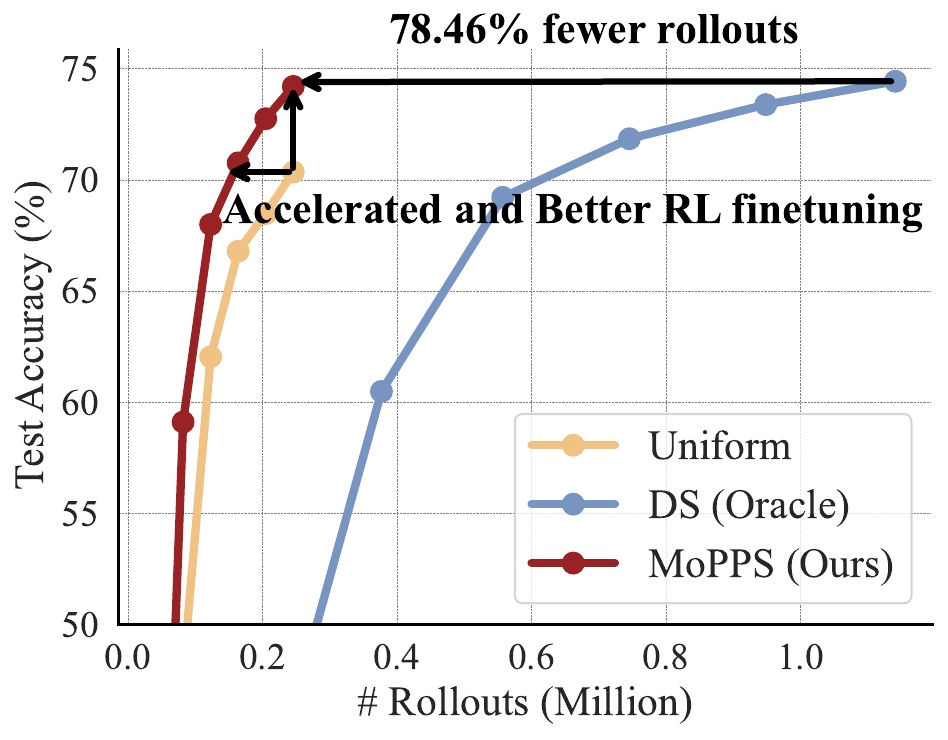}
\vspace{-23pt}
\caption{Performance and efficiency of prompt selection on Countdown. MoPPS outperforms uniform selection in training efficiency and performance and reduces rollouts by 78\% compared to DS \citep{yu2025dapo}.}
\vspace{-15pt}
\label{fig: fig1}
\end{wrapfigure}
\textbf{Promise and Challenge in Amortizing Prompt Evaluation:}
A promising alternative to alleviate the above predicament, i.e., the expensive cost of prompt evaluation under LLMs during online selection, is model predictive task sampling (MPTS) \citep{wang2025model}, where a lightweight risk predictive model is employed to estimate the expected utility, e.g., the returns of agent-environment interactions over iterations.
Such a framework reuses the optimization history as the selection prior, amortizes the process of exact policy evaluation and achieves efficient robust adaptation in meta reinforcement learning and domain randomization scenarios~\citep{qu2025fast}.
Meanwhile, it can be seamlessly integrated with diverse data selection heuristics.
Nevertheless, scaling vanilla MPTS to LLM finetuning is nontrivial: (i) there are no explicit identifiers, e.g., continuous real-vector, to construct the risk predictive model since the prompt dataset is typically finite and in the form of language tokens; (ii) the variable of interest for active prompt selection is the success rate of the reasoning problem, whose distribution is difficult to depict and dynamically evolves with LLMs' updates.

Realizing the importance of online scoring prompt's difficulty for effectively RL finetuning LLMs and considering the above-mentioned challenges, this work aims to answer two \textbf{research questions (RQs)} below:
\begin{enumerate}
    \item Can prompt difficulty be dynamically predicted without exactly interacting with LLMs?
    \item How can predicted outcome serve active data sampling for enhancing LLMs' reasoning power?
\end{enumerate}

\textbf{Approximate Inference towards Prompt Difficulty for Active Selection:}
In response to these RQs, this work develops the Model Predictive Prompt Selection (MoPPS) method for online scoring the prompt difficulty approximately, which is simple to implement yet significantly improves learning efficiency in RL finetuning.
Here, we formulate online prompt selection as a sequential decision-making problem and solve it with a dynamic Bernoulli bandit-based data mining strategy~\citep{berry1972bernoulli, russo2014learning}. 
In other words, each prompt is treated as an arm with stochastic binary rewards drawn from a latent variable as the success rate, and then we adopt posterior sampling to screen prompts in a streaming way.
The sampling outcome of latent variables avoids exact evaluation of prompts, facilitates exploration as stochastic optimism, and supports informative prompt selection without extra LLM inference.

\textbf{Contributions and Primary Findings:}
This work adopts a predict-then-optimize principle and successfully applies the concept of MPTS to the practice of RL finetuning LLMs.
The primary contributions are three-fold:
\begin{enumerate}
    \item We present a probabilistic graphical model to characterize RL finetuning LLMs, where the success rate works as the latent variable.
    The Bernoulli bandit machine is then introduced to enable online prompt selection, offering a new scheme for designing flexible active selection strategies.
    \item We constitute the principled posterior update method to efficiently estimate prompt difficulty with theoretical guarantee, which surrogates evaluation cost during high-quality prompt selection.
    \item Our framework is easy to implement and can be seamlessly integrated into a range of RL algorithms with various LLM backbones, providing a system-efficient component that complements the pipeline of active RL finetuning.
\end{enumerate}

Extensive experiments on complicated reasoning tasks spanning mathematics, planning, and vision-based geometry positively answer two RQs and reveal that MoPPS reliably predicts prompt difficulty, exhibiting high correlations with ground-truth evaluation.
Benefiting such predictability, MoPPS significantly accelerates RL finetuning, e.g., achieving $\bm{1.8\times}$ speedup over uniform sampling on Countdown~\citep{tinyzero}, and yields better performance, with over $\bm{24.4\%}$ relative improvements on the AIME24 while training on MATH~\citep{hendrycksmath2021}.

Importantly, our method achieves comparable performance of evaluation-intensive methods like dynamic sampling~\citep{yu2025dapo} with only $\bm{21\%}$ rollouts, significantly reducing computational costs.

\section{Preliminary}

\subsection{Notations}
The prompt $\tau$ in reasoning tasks can be in the form of a mathematical or logical reasoning problem, e.g., ``What is the degree of the polynomial $(4 +5x^3 +100 +2\pi x^4 + \sqrt{10}x^4 +9)$?" in the MATH~\citep{hendrycksmath2021} dataset.
Let $\mathcal{T} = \{\tau_i\}_{i=1}^N$ denote the full pool of prompts, where each $\tau_i$ represents a unique prompt. 
We define the parameter of the LLM at $t$-th training step by $\pi_{\bm \theta_t}$.
The selected prompt batch at $t$-th training step is $\mathcal{T}_t^{\mathcal{B}}=\{\tau_{t,i}\}_{i=1}^{\mathcal{B}}\subset\mathcal{T}$ with $\mathcal{B}$ the batch size.

At $t$-th time step, the LLM produces $k$ independent responses $\bm y^t_\tau = \{y_\tau^{t,j}\}_{j=1}^k$ conditioned on a prompt $\tau$, where each $y_\tau^{t,i}$ is sampled in an auto-regressive manner.
Here, we associate each prompt $\tau$ with a success rate $\gamma_\tau^t \in [0,1]$ and treat it as the latent variable, which reflects the chance of $\tau$'s problem-solving success under the current policy.
The set of success rates for the prompt batch is denoted by $\Gamma_t^{\mathcal{B}} = \{\gamma^t_{\tau_{t,i}}\}_{i=1}^{\mathcal{B}}$.
Then, each response is scored via examining the ground-truth answer, leading to a binary reward function:
\begin{equation}
\small
r_\tau^{t,j} \sim \mathrm{Bernoulli}(\gamma^t_\tau), \ 
r_\tau^{t,j} = 
\begin{cases}
1, & \text{if response $j$ is correct}, \\
0, & \text{otherwise},
\end{cases}
\ j = 1, \dots, k.
\nonumber
\end{equation}
For each prompt $\tau$, $\bm r^t_{\tau} = \{r_{\tau}^{t,i}\}_{i=1}^k$ denotes the set of rewards for each $k$ generated responses.
And the feedback collected for the prompt batch at step $t$ is written as $\mathcal{R}_t^{\mathcal{B}} = \{\bm r^t_{\tau_{t,i}}\}_{i=1}^{\mathcal{B}}$.
Hence, the likelihood of observing $\bm{r}^t_\tau$, i.e., success counts, given $\gamma^t_\tau$ is binomial:
\begin{equation}
\begin{aligned}
&p(r_{\tau}^{t,i})=(\gamma^t_\tau)^{[r_{\tau}^{t,i}=1]}\cdot(1-\gamma^t_\tau)^{[r_{\tau}^{t,i}=0]}\\
&\Rightarrow
p(\bm r^t_\tau \mid \gamma^t_\tau)
= \binom{k}{s^t_\tau}\cdot(\gamma^t_\tau)^{s^t_\tau}\cdot(1 - \gamma^t_\tau)^{k - s^t_\tau} \
\text{with}
\
s^t_\tau \triangleq \sum_{j=1}^k r_\tau^{t,j}.
\end{aligned}
\label{eq:likelihood}
\end{equation}
For simplicity, this work focuses on binary reward signals in RL finetuning.
However, the proposed method readily applies to richer reward forms such as format rewards~\citep{tinyzero}, either by modeling them directly or by binarizing them through thresholding or rounding.

Finally, we write the entire optimization history up to step $t$ as $H_t = \{\mathcal{T}_i^{\mathcal{B}}, \mathcal{R}_i^{\mathcal{B}}\}_{i=0}^t$, which records all selected batches and their corresponding feedback over iteration.

\subsection{RL Finetuning for LLM }\label{sec:rlfinetune}

The objective of RL finetuning is to optimize the LLM parameters $\bm\theta$ to maximize the expected reward over the prompt distribution. 
In mathematics, this corresponds to
\begin{equation}
    \max_{\bm\theta} \; \mathbb{E}_{\tau \sim \mathcal{T}, \; y \sim \pi_{\bm\theta}(\cdot|\tau)} \left[ r(\tau, y) \right],
\end{equation}
where $\pi_{\bm\theta}(y|\tau)$ denotes the model's conditional distribution over responses given a prompt $\tau$, and $r(\tau, y)$ is a reward function evaluating the quality of response $y$ under prompt $\tau$.

\paragraph{Proximal Policy Optimization (PPO)}
PPO~\citep{schulman2017proximal} is a widely adopted RL algorithm for finetuning LLMs~\citep{hu2025open, zeng7b}. It enhances training stability by enforcing a trust-region constraint, ensuring policy updates remain close to the previous policy $\pi_{\bm\theta_{\text{old}}}$ via a clipped surrogate objective:
\begin{equation}
\small
\begin{aligned}
    \mathcal{J}_{\text{PPO}}(\bm\theta) &= 
    \mathbb{E}_{\tau \sim \mathcal{T}_t^{\mathcal{B}}, \;  y_{\leq t} \sim \pi_{\bm\theta_{\text{old}}}(\cdot|\tau)} \\
    &\left[ \min \left( \rho_{t}(\bm\theta) \cdot \hat{A}_{t}, \;
    \text{clip}(\rho_{t}(\bm\theta), 1 - \epsilon, 1 + \epsilon) \cdot \hat{A}_{t} \right) \right],
\end{aligned}
\end{equation}
where $y_{<t}$ and $y_t$ denote the generated token prefix and the current token at position $t$, respectively, $\rho_{t}(\bm\theta) = \frac{\pi_{\bm\theta}(y_t|\tau,y_{<t})}{\pi_{\bm\theta_{\text{old}}}(y_t|\tau,y_{<t})}$ is the importance sampling ratio with $\epsilon$ the clipping range.
The estimated advantage $\hat{A}_t$ is computed using the generalized advantage estimation~\citep{schulman2015high}.

\paragraph{Group Relative Policy Optimization (GRPO)}
GRPO~\citep{shao2024deepseekmath} estimates the advantage in a group-normalized manner and eliminates the need for the value function.
For each prompt $\tau \in \mathcal{T}_t^{\mathcal{B}}$, the model generates $k$ rollouts $\{y_\tau^i\}_{i=1}^k$ from the old policy $\pi_{\bm\theta_{\text{old}}}$.
Then, the objective of GRPO is written as:
\begin{equation}
\small
\begin{aligned}
   &\mathcal{J}_{\text{GRPO}}(\bm\theta) = 
    \mathbb{E}_{\substack{\tau \sim \mathcal{T}_t^{\mathcal{B}}, \; \{y_\tau^i\}_{i=1}^k \sim \pi_{\bm\theta_{\text{old}}}(\cdot|\tau)}} 
    \left[ \frac{1}{k}\sum_{i=1}^k\frac{1}{|y_\tau^i|}\sum_{t=1}^{|y_\tau^i|}
    \left(
    \min\left(  \right. \right. \right. \\
    & \left. \left. \left. \rho_{i,t}(\bm\theta) \cdot \hat{A}_{i}, \;
    \text{clip}(\rho_{i,t}(\bm\theta), 1 - \epsilon, 1 + \epsilon) \cdot \hat{A}_{i} \right) 
    - \beta D_{KL}(\pi_{\bm \theta}||\pi_{\text{ref}})
    \right) \right]
\end{aligned}
\end{equation}
where $\rho_{i,t}(\bm\theta) = \frac{\pi_{\bm\theta}(y^i_t|\tau,y^i_{<t})}{\pi_{\bm\theta_{\text{old}}}(y^i_t|\tau,y^i_{<t})}$ and $\pi_{\text{ref}}$ is a fixed reference policy.
The KL divergence term penalizes deviation from $\pi_{\text{ref}}$, with $\beta$ controlling the regularization strength, and the group-relative advantage for the $i$-th response is calculated via normalizing $\{r_\tau^i\}_{i=1}^k$:
\begin{equation}
    \hat{A}_{i} = \frac{r_\tau^i-\text{mean}(\{r_\tau^i\}_{i=1}^k)}{\text{std}(\{r_\tau^i\}_{i=1}^k}.
\end{equation}

\paragraph{Online Prompt Selection}
RL finetuning of LLMs typically suffers from substantial computational overhead, facilitating a line of work~\citep{yu2025dapo, zhang2025srpo, chen2025self, bae2025online} to explore online prompt selection for the purpose of training acceleration.

One recent SOTA approach is \textbf{Dynamic Sampling (DS)} developed in~\citep{yu2025dapo}, which is driven by the observation that algorithms such as GRPO encounter vanishing gradients when prompts have success rate equal to $0$ or $1$. 
To mitigate this, DS over-samples a larger candidate set $\mathcal{T}_t^{\hat{\mathcal{B}}} \subseteq \mathcal{T}$ with $\hat{\mathcal{B}} > \mathcal{B}$, then filters out uninformative prompts to construct the actual training batch:
\begin{equation}
    \mathcal{T}_t^{\mathcal{B}} = \left\{ \tau \in \mathcal{T}_t^{\hat{\mathcal{B}}} \;\middle|\; 0 < \sum_{i=1}^k r^i_\tau < k \;\text{or}\; \text{std}\left(\{r^i_\tau\}_{i=1}^k\right) > 0 \right\}.
    \label{eq:ds}
\end{equation}
Similar ideas, which prioritize prompts with success rates near $0.5$, are proposed in~\citep{bae2025online, chen2025self} and show that the optimization process benefits from such a configuration.
These online prompt selection methods increase the proportion of effective prompts in each batch, thereby reducing the number of iteration steps.
However, the reduced training steps come at the expense of additional computational cost from exact LLM evaluations~\citep{zheng2025act}.

\subsection{Model Predictive Task Sampling}

MPTS~\citep{wang2025model} amortizes costly policy or data evaluation through active inference by modeling task optimization as a generative process. 
Using streaming variational inference~\citep{broderick2013streaming, nguyen2017variational}, it builds a predictive model $p(\ell \vert \tau, H_t; \bm\theta_t)$ to estimate evaluation metrics like returns in RL or training loss. 
These predictions guide active sampling via criteria such as UCB~\citep{auer2002finite} or posterior sampling~\citep{qu2025fast}, reducing costly environment interactions or inference calls.

MPTS has shown its computational and annotation efficiency in adaptive decision-making~\citep{qu2025fast} and supervised finetuning~\citep{wang2025model}, but assumes continuous task spaces with explicit identifiers. 
In particular, RL finetuning of LLMs involves discrete, prompt-defined tasks and emphasizes training efficiency. 
This work instead targets (i) amortizing prompt evaluation costs and (ii) identifying acquisition strategies to accelerate RL finetuning.

\section{Method}

This section surrounds two \textbf{RQs} from Sec.~\ref{sec:intro} and presents a principled framework for accelerating RL finetuning via model predictive prompt selection. 
We recast RL finetuning as a generative process with success rate as a latent variable, estimate prompt-specific performance using Bayesian Bernoulli bandits, and adopt Thompson sampling with selection criteria for efficient RL finetuning.
\begin{figure}[ht]
    \centering
    \includegraphics[width=\linewidth]{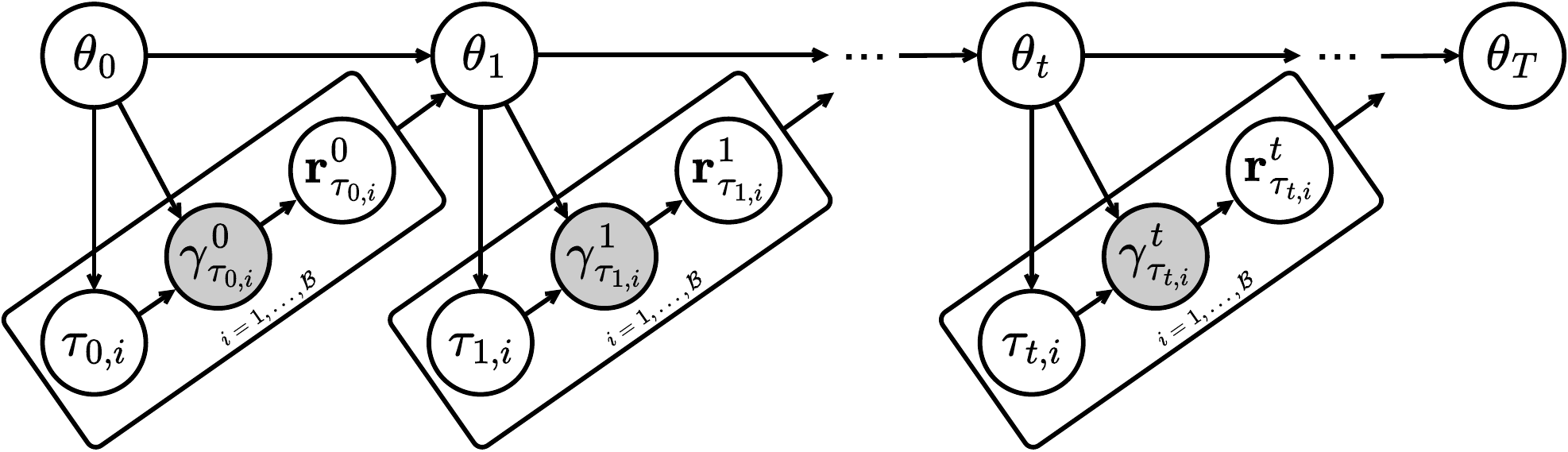}
    \vspace{-15pt}
    \caption{Probabilistic graphical model for RL finetuning of LLMs.
    The reward signal $\bm{r}^t_{\tau_{t,i}}$ is a set of binary values evaluating the $k$ generated responses, governed by the latent success rate $\gamma^t_{\tau_{t,i}}$.
    The prompt batch $\{\tau_{t,i}\}_{i=1}^{\mathcal{B}}$ is selected under specific criteria based on current LLM $\bm\theta_t$.
    The white and grey nodes respectively denote observed and latent variables.}
    \vspace{-10pt}
    \label{fig:pgm}
\end{figure}

\subsection{RL Finetuning as A Generative Process}

The process of RL finetuning involves a couple of variables, e.g., LLMs' parameters $\bm\theta_t$, the prompt batch $\mathcal{T}_t^{\mathcal{B}}$, the generated responses, and the batch reward signals $\mathcal{R}_t^{\mathcal{B}}$ over iterations.
Recent advances \citep{yu2025dapo} have demonstrated the importance of prompt selection based on specific criteria in accelerating the training process.

\paragraph{Generative Process of Active RL Finetuning.}
Putting these ingredients together, we can express the joint distribution of relevant variables and derive its factorization as: 
\begin{equation}\label{eq:full-joint}
\begin{aligned}
  &p\bigl(\bm\theta_{0:T},\mathcal{T}_{0:T-1}^{\mathcal{B}},\mathcal{R}_{0:T-1}^{\mathcal{B}}\bigr)\quad= \quad p(\bm\theta_0)
    \prod_{t=0}^{T-1}
      \;\underbrace{p\bigl(\mathcal{T}_t^{\mathcal{B}}\mid \bm\theta_t \bigr)}_{\text{\textbf{Prompt Selection}}}
      \;\\
  &\underbrace{p\bigl(\bm\theta_{t+1}\mid \bm\theta_t, \mathcal{R}_t^{\mathcal{B}}, \mathcal{T}_t^{\mathcal{B}}\bigr)}_{\text{\textbf{Policy Optimization}}}
      \underbrace{\int p\bigl(\Gamma_t^{\mathcal{B}}\mid \mathcal{T}_t^{\mathcal{B}}, \bm\theta_t\bigr)
      \;p\bigl(\mathcal{R}_t^{\mathcal{B}}\mid \Gamma_t^{\mathcal{B}}\bigr)\;d \Gamma_{t}^{\mathcal{B}}}_{\textbf{Prompt Evaluation}},
\end{aligned}
\end{equation}
where the prompt selection term encompasses some selection mechanism and the prompt evaluation is associated with a collection of latent variables as the success rate in Fig.~\ref{fig:pgm}.

The prompt evaluation term in Eq.~(\ref{eq:full-joint}) implies that response generation requires several LLM inferences, which is compute-intensive but crucial to optimize policy, and also used to assess prompt difficulty for online selection, as discussed below.
When no prompt selection criteria are incorporated in optimization, random prompt selection, such as $\text{Uniform}(\mathcal{T}_t^\mathcal{B})$, is independent of the updated policy $\bm\theta_t$ and incurs no extra inference overhead.
However, random selection suffers from sampling redundancies and tends to consume numerous iterations to converge~\citep{yu2025dapo}.

\paragraph{Price to Pay in Prompt Evaluation and Selection.}
While online prompt selection improves sample efficiency, it often incurs substantial computational cost, as it typically requires additional real evaluations on a larger candidate set $\mathcal{T}_t^{\hat{\mathcal{B}}}$ ($\hat{\mathcal{B}} > \mathcal{B}$) to score and filter prompts~\citep{yu2025dapo, bae2025online}:
\begin{equation}\label{eq:prompt_select_pipeline}
\begin{aligned}
\text{Online Prompt } &\text{Selection:}\\ 
&\mathcal{T}_t^{\hat{\mathcal{B}}} \xrightarrow{\text{\textbf{Evaluate}}} \{\mathcal{T}_t^{\hat{\mathcal{B}}},\mathcal{R}_t^{\hat{\mathcal{B}}} \}\xrightarrow{\text{\textbf{Filter}}} \{\mathcal{T}_t^{\mathcal{B}},\mathcal{R}_t^{\mathcal{B}} \}.
\end{aligned}
\end{equation}
Formally, the conditional distribution of prompt selection can be expressed as:
\begin{equation}\label{eq:prompt_select_integral}
\begin{aligned}
p(\mathcal{T}_t^\mathcal{B}\mid\bm \theta_t) 
&= \int
    p(\mathcal{T}_t^\mathcal{B} \mid \mathcal{R}_t^{\hat{\mathcal{B}}}, \mathcal{T}_t^{\hat{\mathcal{B}}}) p(\mathcal{T}_t^{\hat{\mathcal{B}}}) 
    \\
    &\underbrace{\int p(\Gamma_t^{\hat{\mathcal{B}}} \mid \mathcal{T}_t^{\hat{\mathcal{B}}}, \bm\theta_t)
    \, p(\mathcal{R}_t^{\hat{\mathcal{B}}} \mid \Gamma_t^{\hat{\mathcal{B}}}) 
    \, d\Gamma_t^{\hat{\mathcal{B}}}}_{\text{\textbf{Extra Prompt Evaluation}}} \;d\mathcal{R}_t^{\hat{\mathcal{B}}}\;d\mathcal{T}_t^{\hat{\mathcal{B}}},
\end{aligned}
\end{equation}
where $p(\mathcal{T}_t^{\hat{\mathcal{B}}})$ denotes the probability of sampling a larger candidate set, and $p(\mathcal{T}_t^\mathcal{B} \mid \mathcal{R}_t^{\hat{\mathcal{B}}}, \mathcal{T}_t^{\hat{\mathcal{B}}})$ specifies the conditional probability of selecting the prompt batch after extra prompt evaluation under some criteria.

As can be seen in Eq.~(\ref{eq:prompt_select_pipeline}) and (\ref{eq:prompt_select_integral}),
though this explicit evaluate-then-filter pipeline online identifies crucial prompts and accelerates learning, the additional inference over the candidate batch substantially brings computational and memory burden per-step cost.

\subsection{Bayesian Inference towards Prompt Success Rate}\label{sec:bayesian}

To circumvent additional evaluation overhead, we draw inspiration from MPTS~\citep{wang2025model} and introduce a Bayesian surrogate model to (i) dynamically model the success rate $\gamma^t_\tau$ for each prompt using optimization histories and (ii) enable posterior-guided sampling of informative prompts without requiring additional LLM inference.

\paragraph{Exploitation and Exploration in Prompt Selection.}
Prompt selection requires sequentially choosing prompts with unknown effectiveness that must be dynamically estimated from binary success feedback.
To balance exploiting prompts with demonstrated effectiveness and exploring uncertain prompts that may provide more informative learning signals, we formulate online prompt selection as a stochastic Bernoulli bandit problem.

\begin{definition}[\textbf{Prompt Selection Bernoulli Bandit}]
Each prompt $\tau \in \mathcal{T}$ is treated as an arm in a stochastic multi-armed bandit, characterized by an unknown success rate $\gamma^t_\tau \in [0,1]$.
Pulling an arm corresponds to querying the current policy $\pi_{\bm\theta_t}$ on prompt $\tau$ and observing binary feedback $r^t_\tau \in \{0, 1\}$ indicating success or failure.
The objective is not to maximize cumulative reward but to preferentially select prompts that provide the most informative gradients for model learning, e.g., $\gamma^t_\tau \approx 0.5$~\citep{bae2025online, chen2025self}.
\end{definition}

This formulation offers a unified framework for analyzing prompt selection strategies in RL finetuning and supports principled algorithm design based on bandit theory.
Prior methods based on real evaluation and deterministic filtering can be seen as a special case of this framework, corresponding to greedy exploitation with near-complete candidate feedback.
In contrast, we introduce a Bayesian model that maintains and updates a posterior belief over each $\gamma^t_\tau$, enabling efficient prompt selection that naturally balances exploration and exploitation without costly LLM inference.

\begin{figure*}[t]
    \centering
    \vspace{-5pt}
    \includegraphics[width=0.9\linewidth]{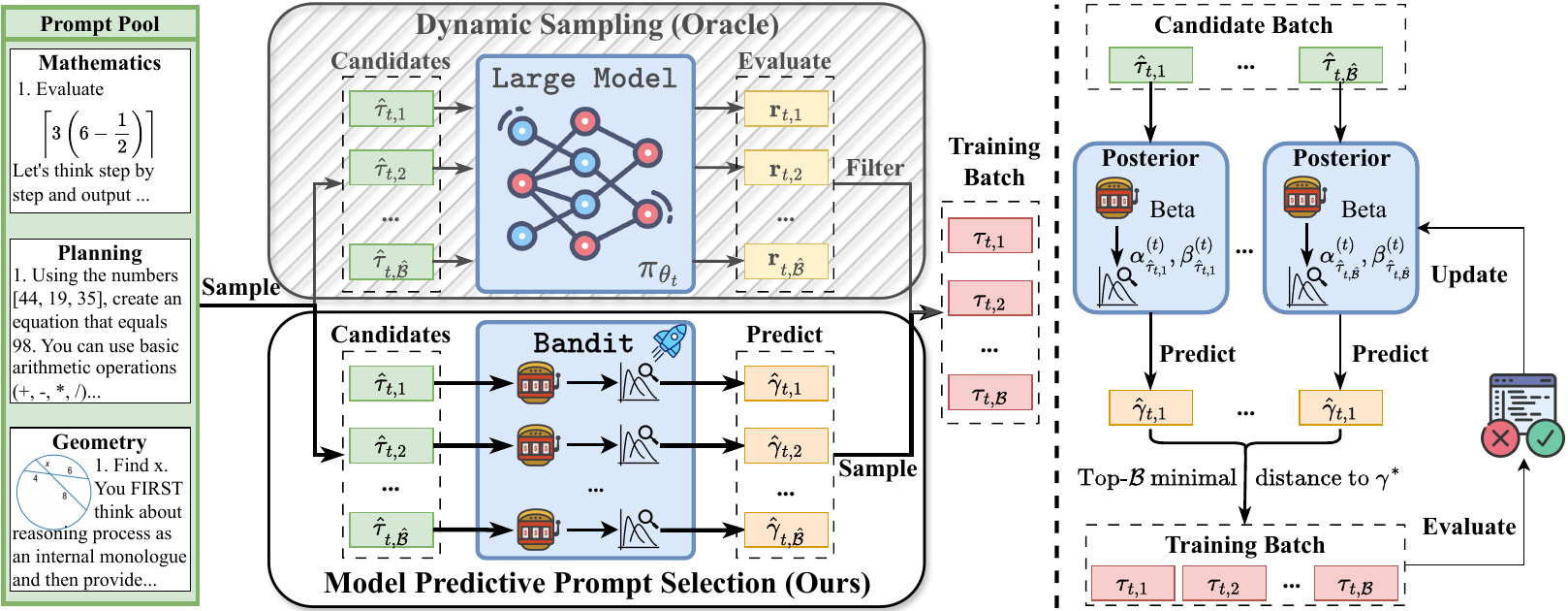}
    \vspace{-5pt}
    \caption{Framework Overview. \textbf{Left:} Comparison between \emph{Dynamic Sampling (Oracle)}, which filters prompts based on actual LLM evaluation on candidates, and our \emph{Model Predictive Prompt Selection (MoPPS)}, which predicts success rates to avoid extra inference cost. \textbf{Right:} MoPPS  predicts success rates for candidates from posterior parameters, based on which prompts closest to a target $\gamma^*$ are selected for training; the posterior is then updated using new feedback.}
    \Description{Framework Overview. \textbf{Left:} Comparison between \emph{Dynamic Sampling (Oracle)}, which filters prompts based on actual LLM evaluation on candidates, and our \emph{Model Predictive Prompt Selection (MoPPS)}, which predicts success rates to avoid extra inference cost. \textbf{Right:} MoPPS  predicts success rates for candidates from posterior parameters, based on which prompts closest to a target $\gamma^*$ are selected for training; the posterior is then updated using new feedback.}
    \label{fig:framework}
    \vspace{-5pt}
\end{figure*}
\paragraph{Recursive Bayesian Update.} Next, we detail the recursive Bayesian update procedure for efficient posterior inference of the success rates $\gamma^t_\tau$.
To enable tractable inference and closed-form posterior updates, we place a Beta prior over the initial success rate:
\begin{equation}
\gamma^0_\tau \sim \mathrm{Beta}(\alpha_\tau^{0}, \beta_\tau^{0}),
\end{equation}
where $\alpha_\tau^{0}$ and $\beta_\tau^{0}$ reflect prior pseudo-counts of successes and failures, typically set to $(1,1)$ for a uniform prior.
Other informative priors can also be incorporated, as evaluated in Sec.~\ref{sec:selection}.

By Bayes rule, the posterior distribution over $\gamma^t_\tau$ given observations up to step $t$ is:
\begin{equation}
\underbrace{p(\gamma^t_\tau \mid H_t)}_{\text{\textbf{Updated Posterior}}} \propto \quad\underbrace{p(\bm r^t_\tau \mid \gamma^t_\tau)}_{\text{\textbf{Likelihood}}} \cdot \underbrace{p(\gamma^t_\tau \mid H_{t-1})}_{\text{\textbf{Conjugate Prior}}},
\label{eq:bayes}
\end{equation}
where $p(\gamma^t_{\tau} \mid H_{t-1})\sim \mathrm{Beta}(\alpha^{t}_\tau, \beta^{t}_\tau)$ represents the conditional prior using the last time updated posterior $p(\gamma^{t-1}_{\tau} \mid H_{t-1})$ as the proxy when $t\ge 1$ and $p(\bm r^t_{\tau} \mid \gamma^t_{\tau})$ is the likelihood of observing the feedback under prompt $\tau$.

Since the Beta distribution is conjugate to the Bernoulli likelihood in Eq.~(\ref{eq:likelihood}), the posterior of $\gamma$ also follows a Beta distribution:
\begin{equation}
\gamma^t_{\tau} \mid H_{t} \sim \mathrm{Beta}(\alpha^{t'}_\tau, \beta^{t'}_\tau),
\end{equation}
with the following recursive update rules:
\begin{equation}
\begin{aligned}
\alpha^{t'}_{\tau} = \alpha^{t}_{\tau} + s^t_\tau, \quad
\beta^{t'}_{\tau} = \beta^{t}_{\tau} + k - s^t_\tau.
\end{aligned}
\end{equation}
These serve as the prior for the next step under the streaming Bayes setup:
\begin{equation}
\alpha^{t+1}_\tau = \alpha^{t'}_\tau, \quad \beta^{t+1}_\tau = \beta^{t'}_\tau.
\end{equation}
These updates accumulate evidence over time, with $\alpha^{t}_\tau$ and $\beta^{t}_\tau$ representing the total (pseudo) counts of observed successes and failures for prompt $\tau$, respectively, up to step $t$.
This posterior serves as a compact and efficient representation of uncertainty over prompt difficulty, supporting downstream sampling and decision-making without requiring LLM inference.

\paragraph{Incorporating Temporal Discounting.}  
Note that the distribution of $\gamma^t_\tau$ relies on the updated model parameters, and a significant update of $\bm\theta_{t}$ over iterations makes the distribution nonstationary.
To precisely estimate the distribution parameters under these scenarios, we apply exponential discounting to past observations, placing more weight on recent feedback.
With the decay factor $\lambda \in (0,1)$, we derive the update rule for the parameter posterior at step $t$ as:
\begin{equation}
\begin{aligned}
&\alpha^{t'}_{\tau} = \lambda \cdot \alpha^{t}_{\tau} + (1 - \lambda) \cdot \alpha_{\tau}^{0}  + s^t_\tau,\\
&\beta^{t'}_{\tau} = \lambda \cdot \beta^{t}_{\tau} + (1 - \lambda) \cdot \beta_{\tau}^{0}  + k - s^t_\tau.
\end{aligned}
\label{eq:betaupdate}
\end{equation}
Such a design strikes a balance between adaptivity and stability in dynamic training regimes.
A lower $\lambda$ value places more emphasis on recent feedback, which helps adapt to nonstationary training dynamics.
Conversely, when the training dynamics are nearly stationary, setting $\lambda$ closer to 1 improves performance by making better use of historical data.
An ablation study on this strategy is presented in Appendix~\ref{appsec:ablate}.

\paragraph{Guaranteed Posterior Estimation and Efficiency Enhancement.} 
We derive Theorem~\ref{theorem_bound} to analyze the estimation error bound of the posterior mean as an estimator of the underlying time-varying success rate $\gamma^t_\tau$.
The proof is provided in Appendix~\ref{appsec:proof}.

\begin{restatable}[Bounded Success Rate Estimation Error]{theorem}{posteriormeanbound}\label{theorem_bound}
Define the posterior mean estimate at step $t$ as $\bar{\gamma}^{t}_\tau := \frac{\alpha^{t'}_\tau}{\alpha^{t'}_\tau + \beta^{t'}_\tau}$, and assume the true success rate drifts slowly, i.e., $|\gamma^{t}_\tau - \gamma^{t-1}_\tau| \le \delta,\ \forall{t}$. Then, with probability at least $1 - 2\exp(-2k\eta^2)$, the estimation error satisfies the recurrence inequality:
\begin{equation}
\epsilon_t := |\bar{\gamma}^{t}_\tau-\gamma^{t}_\tau|< \lambda\cdot (\epsilon_{t-1} + \delta) + \frac{(1-\lambda)}{2} + \eta.
\nonumber
\end{equation}
\end{restatable}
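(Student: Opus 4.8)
The plan is to rewrite the posterior mean $\bar\gamma^t_\tau$ as a convex combination of three interpretable quantities---the previous posterior mean, the prior mean, and the current empirical success rate---then decompose the error along this combination and control each piece separately, invoking the slow-drift assumption for one term, boundedness of probabilities for another, and Hoeffding's inequality for the stochastic term.

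First I would substitute the discounted update rule from Eq.~(\ref{eq:betaupdate}) into the definition of $\bar\gamma^t_\tau$. Writing $S_t := \alpha^t_\tau+\beta^t_\tau$, $S_0 := \alpha^0_\tau+\beta^0_\tau$, and $D_t := \alpha^{t'}_\tau+\beta^{t'}_\tau = \lambda S_t + (1-\lambda)S_0 + k$, and using that the previous posterior is the current prior so that $\bar\gamma^{t-1}_\tau = \alpha^t_\tau/S_t$, one obtains
\begin{equation}
\bar\gamma^t_\tau = w_1\,\bar\gamma^{t-1}_\tau + w_2\,\mu_0 + w_3\,\hat\gamma^t_\tau, \qquad \hat\gamma^t_\tau := \tfrac{s^t_\tau}{k},\quad \mu_0 := \tfrac{\alpha^0_\tau}{\alpha^0_\tau+\beta^0_\tau},
\nonumber
\end{equation}
where $w_1 = \lambda S_t/D_t$, $w_2 = (1-\lambda)S_0/D_t$, $w_3 = k/D_t$, and $w_1+w_2+w_3 = 1$. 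Subtracting $\gamma^t_\tau$ and applying the triangle inequality gives
\begin{equation}
\epsilon_t \le w_1\,|\bar\gamma^{t-1}_\tau-\gamma^t_\tau| + w_2\,|\mu_0-\gamma^t_\tau| + w_3\,|\hat\gamma^t_\tau-\gamma^t_\tau|.
\nonumber
\end{equation}

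Next I would bound the three factors. For the first, inserting $\gamma^{t-1}_\tau$ and using the drift assumption yields $|\bar\gamma^{t-1}_\tau-\gamma^t_\tau| \le \epsilon_{t-1}+\delta$. For the second, the uniform prior gives $\mu_0 = 1/2$, so $|\mu_0-\gamma^t_\tau|\le 1/2$ since $\gamma^t_\tau\in[0,1]$. For the third, since $s^t_\tau$ is a sum of $k$ i.i.d.\ $\mathrm{Bernoulli}(\gamma^t_\tau)$ variables, Hoeffding's inequality gives $|\hat\gamma^t_\tau-\gamma^t_\tau| < \eta$ with probability at least $1-2\exp(-2k\eta^2)$, which is exactly the event asserted in the theorem.

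The remaining and main obstacle is to replace the data-dependent weights $w_1,w_2,w_3$ by the clean coefficients $\lambda$, $(1-\lambda)$, $1$ of the statement, i.e.\ to show $w_1\le\lambda$, $w_2\le 1-\lambda$, $w_3\le 1$. Each reduces to a comparison of $S_t$, $S_0$, $k$ with $D_t$. The crux is $w_1\le\lambda \iff S_t\le D_t \iff (1-\lambda)S_t\le(1-\lambda)S_0+k$, which I would establish by solving the linear recurrence $S_{t+1}=\lambda S_t+(1-\lambda)S_0+k$ in closed form, $S_t = S_0 + k\,\frac{1-\lambda^t}{1-\lambda}$, so that $S_t < S_0 + k/(1-\lambda)$ for all $t$. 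The remaining bounds follow at once from $D_t \ge S_0+k > S_0$ (giving $w_2 < 1-\lambda$) and $D_t\ge k$ (giving $w_3\le 1$). Since all three multiplied factors are nonnegative, substituting these weight bounds collapses the convex combination into $\epsilon_t < \lambda(\epsilon_{t-1}+\delta) + \frac{1-\lambda}{2} + \eta$ on the Hoeffding event, which is the claimed recurrence.
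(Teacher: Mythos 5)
Your proposal is correct and follows essentially the same route as the paper's proof: the same convex-combination decomposition of $\bar\gamma^t_\tau$ into previous posterior mean, prior mean, and empirical success rate, the same triangle-inequality split with the drift bound, Hoeffding's inequality for the stochastic term, and the same closed-form solution of the linear pseudo-count recurrence to show $w_1 \le \lambda$ (the paper phrases this as monotonicity of the count sequence, but the underlying computation is identical). The one place you diverge is the middle term: the paper bounds $|\bar\gamma^{-1}_\tau - \gamma^t_\tau| \le 1$ crudely and instead sharpens the weight to $w_2 \le \frac{1-\lambda}{2}$, which it obtains from the auxiliary assumption $k \ge \alpha^0_\tau + \beta^0_\tau$; you bound the weight only by $w_2 < 1-\lambda$ but sharpen the factor to $|\mu_0 - \gamma^t_\tau| \le \frac12$, which requires the auxiliary assumption of a symmetric prior ($\alpha^0_\tau = \beta^0_\tau$). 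Neither side condition appears in the theorem statement, and both hold for the default $\mathrm{Beta}(1,1)$ prior with $k \ge 2$, so the two proofs are equally valid; they simply trade off differently---the paper's version tolerates asymmetric priors provided the rollout count dominates the prior mass, while yours tolerates any rollout count provided the prior is centered at $\frac12$.
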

With high probability, the estimation error can be bounded by the previous error $\epsilon_{t-1}$, the drift magnitude $\delta$, and the tolerance $\eta$ due to the finite sampling size $k$.
This result indicates that the posterior reflects a reliable and adaptive estimate of the true success rate, securing effective prompt selection without additional LLM calls.
Moreover, the recursive inequality highlights the role of the decay factor $\lambda$, which controls the relative importance of past versus recent feedback.

We further analyze the computational complexity of MoPPS and DS~\citep{yu2025dapo}.
DS repeatedly samples candidate prompts, queries LLM rollouts, and filters out those that do not satisfy a predefined constraint until reaching $\mathcal{B}$ selected prompts.
Let $p_{\text{keep}}$ denote the expected retention probability of each sampled prompt.
$C_{\text{LLM}}$ quantifies the expected cost per prompt for generating and evaluating $k$ LLM rollouts, and $C_{\text{pred}}$ measures the cost for posterior estimation per prompt.
Then, the expected time complexity for prompt selection and evaluation per step is: $\mathcal{O}\left( \lceil\frac{1}{p_{\text{keep}}}\rceil \cdot\mathcal{B} \cdot k \cdot C_{\text{LLM}} \right)$ for DS while $\mathcal{O}\left(\hat{\mathcal{B}} \cdot C_{\text{pred}} + \mathcal{B} \cdot k \cdot C_{\text{LLM}} \right) \approx \mathcal{O}\left( \mathcal{B} \cdot k \cdot C_{\text{LLM}} \right)$ for MoPPS.
Since typically $p_{\text{keep}} < 1$ and $C_{\text{pred}} \ll C_{\text{LLM}}$, MoPPS significantly reduces computational overhead compared to DS by avoiding repeated LLM inference for prompt selection.

\subsection{Model Predictive Prompt Selection}
The empirical results in Fig.~\ref{fig:corr} show that the posterior distribution's estimate of a prompt's success rate correlates strongly with the ground truth.
This provides a reliable foundation for using the posterior as an efficient proxy to evaluate prompt difficulty without querying the expensive LLM.
The below illuminates the pipeline of MoPPS, comprising two critical steps.

\paragraph{Fast Success Rate Estimates from Approximate Posteriors.}
Instead of relying on the posterior mean, we employ Thompson Sampling~\citep{thompson1933likelihood}, drawing a sample from the Beta posterior to incorporate stochastic optimism into the success rate estimate:
\begin{equation}
\hat{\gamma}^t_\tau \sim \mathrm{Beta}(\alpha_\tau^{t}, \beta_\tau^{t})
\quad\forall\tau\in\mathcal{T}_{t}^{\hat{\mathcal{B}}}.
\end{equation}
Note that this sampling uses the conditional prior $p(\gamma^t_\tau \mid H_{t-1})$, as defined in Eq.~(\ref{eq:bayes}), as a proxy for the posterior $p(\gamma^t_\tau \mid H_t)$, since prompt selection is performed before querying the LLM. This design enables efficient, inference-free prompt selection.

Importantly, we adopt Thompson Sampling in this work for its simplicity and natural exploration-exploitation trade-off, which inherently serves as an uncertainty-aware data curation mechanism. MoPPS can also be seamlessly combined with other acquisition strategies such as UCB~\citep{auer2002finite}.
In addition, our lightweight prediction allows us to extend $\mathcal{T}_{t}^{\hat{\mathcal{B}}}$ to the entire pool $\mathcal{T}$ at negligible cost, which significantly improves the exploration space and is infeasible for prior methods that rely on exact evaluation.  

\paragraph{Active Prompt Selection from the Predicted Outcome.}
Prior works~\citep{bae2025online, chen2025self} indicate that prompts with intermediate difficulty, i.e., success rates near a target value $\gamma^*$, typically around $0.5$, yield the most informative gradients for RL finetuning.
Viewed from the perspective of curriculum learning~\citep{bengio2009curriculum}, maintaining a target difficulty induces an implicit curriculum that gradually transitions from easier to harder prompts as the model improves.
Leveraging this insight, at each step, we construct the training batch $\mathcal{T}_t^\mathcal{B}$ by selecting the $\mathcal{B}$ prompts whose sampled success rates $\hat{\gamma}^t_\tau$ are closest to $\gamma^*$:
\begin{equation}
\label{eq:topk}
\mathcal{T}_t^\mathcal{B} = \operatorname{Top-}\mathcal{B}\left( \left\{ \tau \in \mathcal{T}_{t}^{\hat{\mathcal{B}}} \;\middle|\; -\left|| \hat{\gamma}^t_\tau - \gamma^* |\right|_2^2 \right\} \right),
\end{equation}
MoPPS can also be easily integrated with alternative selection strategies, as evaluated in Sec.~\ref{sec:selection}.

\subsection{Implementation Pipeline}

Eq.~(\ref{eq:pps}) abstracts the core idea of our method, where the blue-bold \textcolor{blue}{\textbf{Predict}} step emphasizes replacing costly real prompt evaluations with efficient posterior-based prediction of success rates.
\begin{equation}
\begin{aligned}
\text{Model Predic} &\text{tive Prompt Selection:} \\
&\mathcal{T}_t^{\hat{\mathcal{B}}} \xrightarrow{\text{\textbf{\textcolor{blue}{Predict}}}} \{\mathcal{T}_t^{\hat{\mathcal{B}}},\hat{\Gamma}_t^{\hat{\mathcal{B}}} \}\xrightarrow{\text{\textbf{Select}}} \{\mathcal{T}_t^{\mathcal{B}} \}.
\end{aligned}
\label{eq:pps}
\end{equation}
The framework overview is illustrated in Fig.~\ref{fig:framework}. MoPPS \textit{retains computational efficiency}, \textit{encourages exploration}, and preserves the ability to \textit{prioritize the most beneficial prompts for policy updates}.
Algorithm~\ref{algo} presents the proposed MoPPS, which can be seamlessly integrated with any RL finetuning algorithm.

\begin{algorithm}
\caption{Model Predictive Prompt Selection (MoPPS)}
\KwInput{
Prompt pool $\mathcal{T} = \{\tau_i\}_{i=1}^N$;
Prior Beta parameters $\alpha, \beta$;
Candidate batch size $\hat{\mathcal{B}}$; Selected batch size $\mathcal{B}$;
Target success rate $\gamma^*$; 
Decay factor $\lambda$;
Reasoning model $\pi_{\bm \theta_0}$ with parameters $\bm \theta_0$;
Total training steps $T$
}
\KwOutput{Finetuned model $\pi_{\bm \theta_T}$}

$\forall{\tau\in\mathcal{T}}$, initialize posterior parameters $(\alpha^{0}_{\tau}, \beta^{0}_{\tau}) \leftarrow (\alpha, \beta)$ \;

\For{$t = 0$ \KwTo $T-1$}{
    Randomly sample candidate set $\mathcal{T}_{t}^{\hat{\mathcal{B}}} = \{\hat{\tau}_{t,i}\}_{i=1}^{\hat{\mathcal{B}}}$ from $\mathcal{T}$\;

    \tcp{\textcolor{blue}{Difficulty Prediction}}
    \ForEach{$\hat{\tau}_{t,i} \in \mathcal{T}_{t}^{\hat{\mathcal{B}}}$}{
        Sample predicted difficulty $\hat{\gamma}^t_{\hat{\tau}_{t,i}} \sim \mathrm{Beta}(\alpha^t_{\hat{\tau}_{t,i}}, \beta^t_{\hat{\tau}_{t,i}})$ \;
   }

    \tcp{\textcolor{blue}{Active Prompt selection}}
    Select $\mathcal{T}_{t}^\mathcal{B} = \{\tau_{t,i}\}_{i=1}^{\mathcal{B}}$ as the $\mathcal{B}$ prompts from $\mathcal{T}_{t}^{\hat{\mathcal{B}}}$ via Eq.~(\ref{eq:topk}) \;

    \ForEach{$\tau_{t,i} \in \mathcal{T}_{t}^\mathcal{B}$}{
        Generate responses $\bm{y}_{\tau_{t,i}} = \{y^j_{\tau_{t,i}}\}_{j=1}^{k}$ using $\pi_{\bm \theta_t}$ and \;
        Compute corresponding rewards $\bm{r}_{\tau_{t,i}} = \{r^j_{\tau_{t,i}}\}_{j=1}^{k}$ (e.g., binary correctness scores) \;
   }

    Update $\bm \theta_t$ using $\{(\tau_{t,i}, \bm{y}_{\tau_{t,i}}, \bm{r}_{\tau_{t,i}})\}_{i=1}^{\mathcal{B}}$ with a suitable RL algorithm to obtain $\bm \theta_{t+1}$\;

    \tcp{\textcolor{blue}{Posterior Update}}
    \ForEach{$\tau \in \mathcal{T}_{t}^\mathcal{B}$}{
        Update $(\alpha_{\tau}^{t+1}, \beta_{\tau}^{t+1})$ via Eq.~(\ref{eq:betaupdate}) \;
   }
}
\label{algo}
\end{algorithm}

\section{Experiments}
This section conducts experiments to evaluate whether MoPPS can online predict prompt difficulty, accelerate RL finetuning, and improve performance. 
Additional analyses explore its flexibility across selection strategies, use of prior knowledge, need for posterior updates, and compatibility with various RL algorithms.

\subsection{Experimental Setup}

We evaluate MoPPS across three representative reasoning tasks: \textbf{mathematics}, \textbf{planning}, and \textbf{multi-modal geometry}.
To demonstrate its versatility, we adopt \textbf{diverse LLM backbones} with different sizes, including base LLMs, distilled variants, and multi-modal models.
For RL finetuning, we use the widely adopted GRPO algorithm built on verl~\citep{sheng2024hybridflow} framework, though MoPPS is compatible with other algorithms as shown in Sec.~\ref{exp:algocompatible}.
Test accuracy is reported as the average \texttt{pass@1} over 16 independent generations per problem, computed on training curves and evaluation results.
Further implementation details are in Appendix~\ref{appsec:implementation}, along with additional experimental results like ablation studies in Appendix~\ref{appsec:results} and data examples in Appendix~\ref{appsec:dataexample}.

\paragraph{Mathematics Task} We train LLMs on the training split of the MATH dataset~\citep{hendrycksmath2021}, which consists of problems from mathematics competitions. Following prior work~\citep{luo2025deepscaler}, we use the DeepSeek-R1 distillation models R1-Distill-Qwen-1.5B and R1-Distill-Qwen-7B~\citep{guo2025deepseek}, and track performance on AIME24 during training. Final evaluations are conducted on benchmarks including AIME24, AMC23, MATH500~\citep{lightman2023let}, Minerva Math (Minerva.)~\citep{lewkowycz2022solving}, and OlympiadBench (Olympiad.)~\citep{he2024olympiadbench}.
\paragraph{Planning Task} We adopt the Countdown Number Game, which requires combining given numbers using basic arithmetic operations to reach a target value. Training is performed on a subset of the Countdown-34 (CD-34) dataset~\citep{tinyzero}, with performance tracked on a held-out split. Final evaluation is conducted on both CD-34 and a more challenging variant, Countdown-4 (CD-4). Following~\cite{chen2025self}, we use two base models: Qwen2.5-3B and Qwen2.5-7B~\citep{yang2024qwen2}.
\paragraph{Visual Geometry Task} Geometry problems require both visual understanding and reasoning. Two vision language models, Qwen2.5-VL-3B-Instruct and Qwen2.5-VL-7B-Instruct~\citep{bai2025qwen2}, are trained on the training split of the Geometry3k dataset~\citep{lu2021inter, geometry3k_dataset} and evaluated on its test split.

\paragraph{Baselines.}
Two common sampling strategies are compared with MoPPS: (1) \textbf{Uniform}, which samples prompts uniformly from the prompt pool; (2) \textbf{History Resampling (HR)}~\citep{zhang2025srpo}, which excludes fully solvable prompts each epoch; and (3) \textbf{Dynamic Sampling (DS)}~\citep{yu2025dapo}, which oversamples prompts and filters out uninformative ones based on their exact evaluation, as described in Eq.~(\ref{eq:ds}).  
Notably, \textbf{DS serves as an Oracle and latest SOTA} baseline since it relies on real evaluation feedback.
Our focus is on reducing computational overhead relative to DS rather than outperforming it in overall accuracy.

\subsection{Main Results}

\subsubsection{Highly Correlated Difficulty Prediction}

\begin{figure*}[t]
    \centering
    \includegraphics[width=0.75\linewidth]{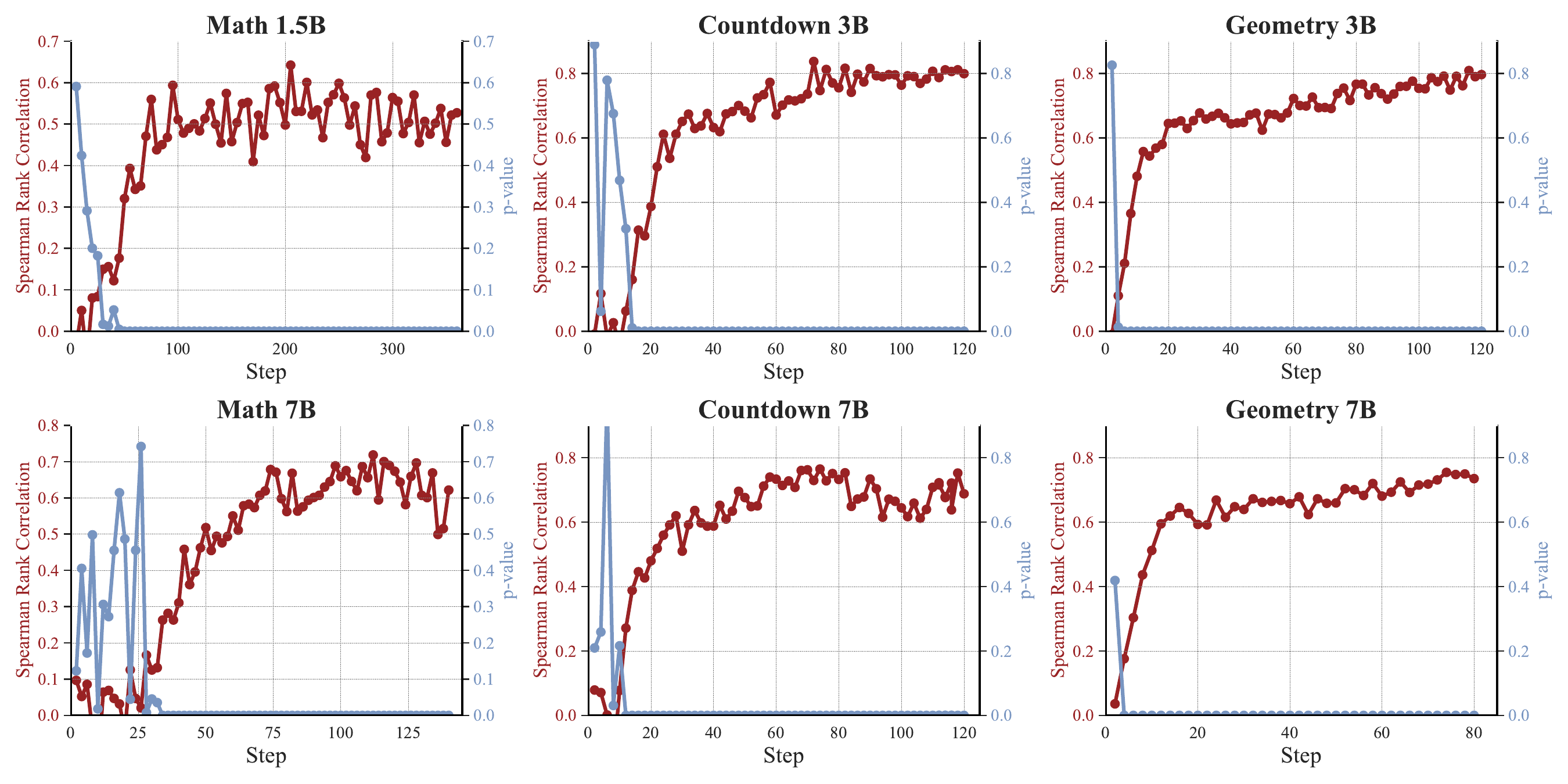}
    \vspace{-15pt}
    \caption{Spearman rank correlation and $p$-value over training steps between the predicted prompt difficulty from our Bayesian surrogate and the empirical success rate. The strong correlation indicates that our method effectively predicts prompt difficulty without incurring costly LLM inferences.
}
    \Description{Spearman rank correlation and $p$-value over training steps between the predicted prompt difficulty from our Bayesian surrogate and the empirical success rate. The strong correlation indicates that our method effectively predicts prompt difficulty without incurring costly LLM inferences.
}
    \vspace{-10pt}
    \label{fig:corr}
\end{figure*}

A central insight of this work is that the difficulty of prompts, quantified as the success rate under the current policy, can be dynamically predicted without additional LLM inference.
To rigorously assess the prediction's fidelity, we adopt Spearman's rank correlation coefficient~\citep{sedgwick2014spearman} $\rho$ as the metric, which quantifies the strength and direction of the monotonic relationship between two sequences by computing the Pearson correlation~\citep{cohen2009pearson} on their ranks:
\begin{equation}
\rho = \frac{\mathrm{cov}(\mathrm{rank}(\hat{\Gamma}^\mathcal{B}), \mathrm{rank}(\widetilde\Gamma^\mathcal{B}))}{\sigma_{\mathrm{rank}(\hat{\Gamma}^\mathcal{B})} \cdot \sigma_{\mathrm{rank}(\widetilde\Gamma^\mathcal{B})}},
\end{equation}
where $\hat{\Gamma}^\mathcal{B} = {\hat\gamma_\tau}^\mathcal{B}$ and $\widetilde\Gamma^\mathcal{B} = {\widetilde\gamma_\tau}^\mathcal{B}$ respectively denote the predicted and empirically estimated success rates, and $\mathrm{rank}(\cdot)$ returns the rank ordering of elements.
To assess statistical significance, we report the $p$-value under the null hypothesis testing that $\hat\gamma$ and $\widetilde\gamma$ are independent; lower values indicate stronger correlation.

In Fig.~\ref{fig:corr}, MoPPS exhibits consistently high rank correlation ($\rho > 0.5$) between the estimated difficulty and the ground-truth with extremely low $p$-values across reasoning tasks and diverse backbone models.
Besides, a clear training progresses can be observed that the correlation steadily improves until stabilizing at a high level.
This validates that our Bayesian surrogate accumulates meaningful evidence over time, progressively refining its belief about prompt difficulty.

\begin{tcolorbox}[takeaway]
MoPPS effectively models and tracks prompt difficulty over time. This capability underpins the efficiency and stability of our posterior-guided sampling strategy, supporting dynamically informative prompt selection without requiring additional LLM inference.
\end{tcolorbox}

\begin{figure*}[t]
    \centering
    \includegraphics[width=0.75\linewidth]{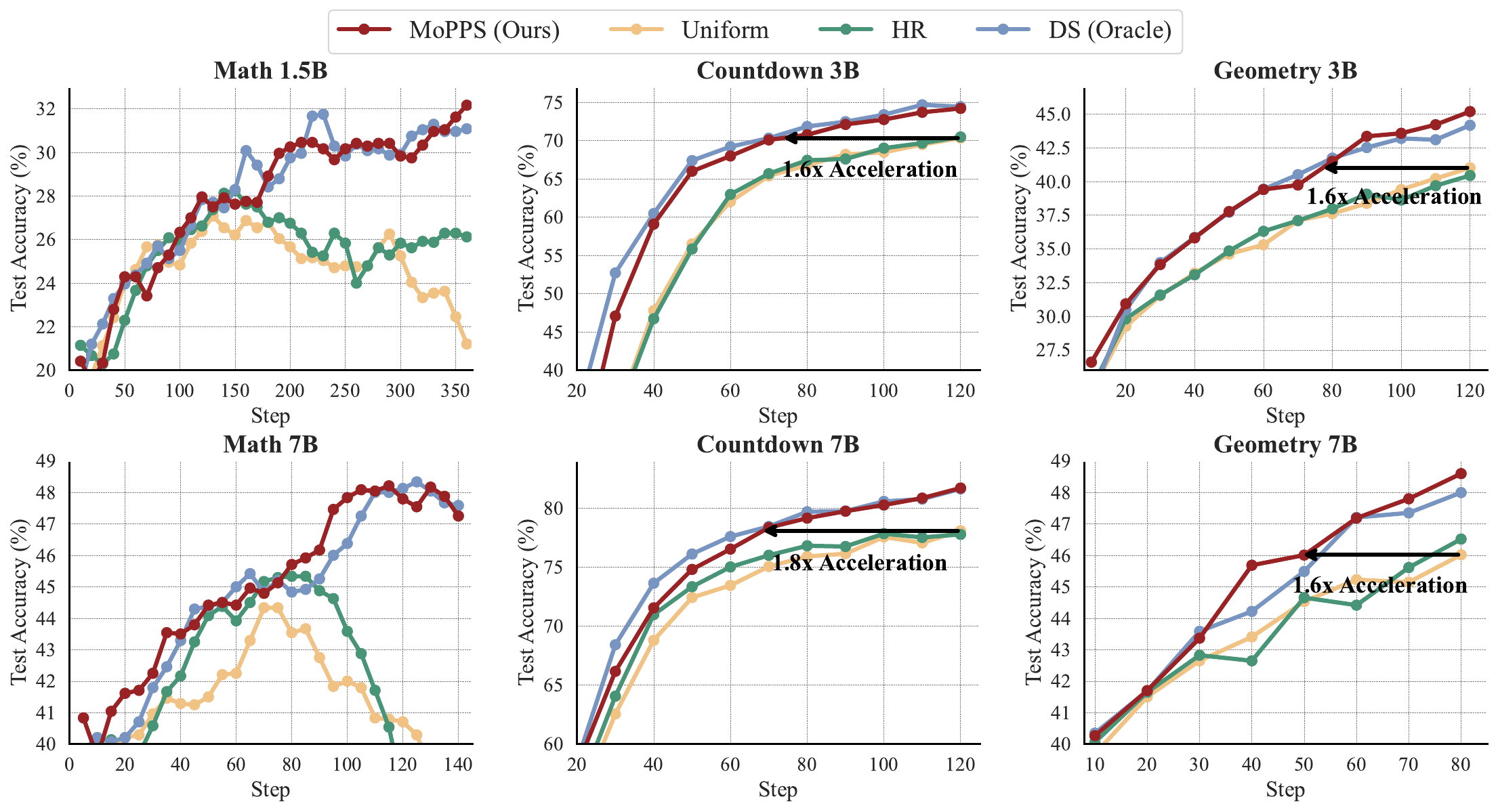}
    \vspace{-15pt}
    \caption{
    Training curves of MoPPS and baselines across three reasoning tasks with varying backbone sizes. 
    Notably, DS serves as an oracle baseline, as it relies on expensive exact LLM evaluations and demands significantly more rollouts.}
    \label{fig:performance}
\end{figure*}

\begin{table*}[t]
\renewcommand{\arraystretch}{1.2}
  \centering
  \caption{Evaluation across mathematics benchmarks. ‘+’ indicates finetuning with the corresponding method. Accuracy is computed as the average \texttt{pass@1} over 16 independent generations per problem. ‘Avg.’ denotes average accuracy across benchmarks, and ‘Rollouts’ indicates the number of rollout samples during finetuning. \textbf{Bold} indicates the best result; \underline{underlined} indicates the second best.
  }
  \vspace{-10pt}
  \resizebox{0.7\linewidth}{!}{
    \begin{tabular}{rccccccc}
    \toprule
    \multicolumn{1}{c}{\textbf{Method}} & \textbf{AIME24} & \textbf{AMC23} & \textbf{MATH500} & \textbf{Minerva.} & \textbf{Olympiad.} & \textbf{Avg. $\uparrow$} &\textbf{Rollouts $\downarrow$}\\
    \midrule
    \textbf{R1-Distill-Qwen-1.5B} & 18.33  & 51.73  & 76.64  & 23.83  & 35.31  & 41.17 & - \\
    \textbf{+Uniform} & 26.46  & 63.18  & 82.78  & 27.46  & 43.00  & 48.57  & \textbf{737k}\\
   \textbf{+HR} & 28.13  & 64.61  & 82.88  & 27.37  & 43.15  & 49.23  & \textbf{737k}\\
   \textbf{+DS (Oracle)} & \underline{31.88}	&\textbf{67.32}	&\underline{84.79}&	\textbf{29.18}	&\textbf{46.83}	&\textbf{52.00} & 2933k\\
\rowcolor{myhighlight}
    \textbf{+MoPPS (Ours)}  & \textbf{32.92}  & \underline{66.72}  & \textbf{84.82}  & \underline{28.81}  & \underline{45.89}  & \underline{51.83} & \textbf{737k} \\
    \midrule
    \textbf{R1-Distill-Qwen-7B} & 37.71  & 68.45  & 86.94  & 34.74  & 46.94  & 54.95 & -  \\
    \textbf{+Uniform} & 45.83  & 73.57  & 89.06  & 37.68  & 50.42  & 59.31  & \textbf{287k} \\
    \textbf{+HR} & 46.46  & 75.98  & 90.01  & \underline{37.94}  & 51.50  & 60.38  & \textbf{287k}\\
    \textbf{+DS (Oracle)} & \textbf{49.79} & \underline{78.99}  & \underline{90.96}  & 37.89  & \textbf{54.45} & \textbf{62.42} & 1147k \\
\rowcolor{myhighlight}
    \textbf{+MoPPS (Ours)} & \underline{48.54}  & \textbf{79.22} & \textbf{91.04} & \textbf{38.49} & \underline{53.69}  & \underline{62.20}  & \textbf{287k} \\
    \bottomrule
    \end{tabular}%
    }
  \label{tab:matheval}%
\end{table*}%

\vspace{-1pt}
\subsubsection{Accelerated RL Finetuning}

We compare the training performance of MoPPS with baselines across different scenarios and backbone models.
Fig.~\ref{fig:performance} shows the training curves, and Table~\ref{tab:matheval} summarizes the final evaluation results on the mathematics task.
Thanks to reliable difficulty prediction, the proposed MoPPS method achieves both training acceleration and better final performance compared to uniform prompt selection.
On MATH, uniform selection suffers from performance collapse, likely due to entropy collapse~\citep{liu2025prorl}.
Effective online prompt selection methods can mitigate this issue and sustain continuous progress, with MoPPS achieving approximately $\frac{32.92 - 26.46}{26.46} \approx \bm{24.4\%}$ relative improvement on AIME24 and about $\bm{6.7\%}$ average relative improvement on multiple benchmarks with the 1.5B backbone compared to Uniform.
On Countdown and Geometry, MoPPS consistently accelerates training across backbone sizes, reaching nearly $\bm{1.8 \times}$ speedup.
Compared to DS, MoPPS attains comparable performance with only  $\bm{25\%}$ rollouts on MATH (Table~\ref{tab:matheval}) and $\bm{21\%}$ on Countdown (Table~\ref{tab:cdeval}).
This efficiency gain stems from DS’s requirement to evaluate a larger candidate prompt set using costly LLM inference at each step, whereas MoPPS amortizes this via lightweight model prediction.

\begin{tcolorbox}[takeaway]
MoPPS significantly outperforms uniform prompt selection, accelerating training and improving overall performance. Meanwhile, it matches DS’s performance with substantially fewer costly LLM rollouts.
\end{tcolorbox}

\subsection{Additional Analysis}

\subsubsection{Algorithm Compatibility}\label{exp:algocompatible}
We assess the compatibility of MoPPS with RL algorithms beyond GRPO by integrating it with two alternative algorithms, \textbf{PPO}~\citep{schulman2017proximal} and \textbf{Reinforce++}~\citep{hu2025reinforce++}, on the Countdown task.
As shown in Table~\ref{tab:algo} and Fig.~\ref{fig:algocurve}, MoPPS consistently improves both training efficiency and final performance compared to uniform selection, regardless of the underlying RL algorithm or whether group generation ($k>1$) is used.
These results confirm that MoPPS is algorithm-agnostic and can be seamlessly integrated into diverse RL finetuning pipelines to enhance sample efficiency.

\subsubsection{The Effects of Prior Knowledge and Selection Strategies}\label{sec:selection}
Our default implementation adopts a uniform prior, $\text{Beta}(1,1)$, and a $\text{Top-}\mathcal{B}$ selection strategy.
To examine the flexibility of our method, we evaluate (i) an alternative \textbf{Threshold} selection strategy, as used in \cite{bae2025online}, which samples prompts with predicted success rates falling within a fixed interval, i.e., $\gamma_{min}\le\hat{\gamma}_\tau\le\gamma_{max}$, and (ii) the integration of \textbf{prior knowledge} (`w/ prior') by pre-evaluating all prompts using the base model and initializing the Beta parameters $\{\alpha, \beta\}$ accordingly.
As shown in Fig.~\ref{fig:ablate}(a), both strategies improve over uniform selection, and $\text{Top-}\mathcal{B}$ performs better.
Incorporating prior knowledge further enhances training efficiency, though our method remains effective even without such prior.

\subsubsection{Role of Online Posterior Updates}\label{sec:offline}
To assess the importance of online posterior updates, we consider an \textbf{Offline} variant that uses only prior knowledge for prompt selection without updating the posterior during training.
As shown in Fig.~\ref{fig:ablate}(b), the offline variant benefits from strong initialization and performs competitively in early stages, but its accuracy degrades later.
This degradation arises from the its inability to adapt to the evolving policy, resulting in outdated difficulty estimates and increasingly suboptimal prompt selection. 
As evidenced in Fig.~\ref{fig:ablate}(c), the offline variant suffers a decline in correlation over time, whereas the online variant improves continually by updating its posterior with new feedback.

\begin{figure*}[t]
    \centering
    \begin{minipage}[c]{0.275\linewidth}
        \centering
\captionof{table}{Evaluation on Countdown with PPO and Reinforce++ using Qwen2.5-3B.
MoPPS consistently improves \texttt{pass@1} accuracy on CD-34 and the harder CD-4 benchmark compared to uniform selection.}
\renewcommand{\arraystretch}{1.2}
\resizebox{\linewidth}{!}{
\begin{tabular}{cccc}
\toprule
\textbf{RL Algos} & \textbf{Benchmarks} & \textbf{Uniform} & \textbf{MoPPS} \\
\midrule
\multirow{2}[1]{*}{\textbf{PPO ($k=1$)}} & CD-34 & 62.33& \textbf{69.12} \\
      & CD-4   & 32.65 & \textbf{40.11} \\
\midrule
\multirow{2}[1]{*}{\textbf{PPO ($k=8$)}} & CD-34 & 72.49 & \textbf{75.50} \\
      & CD-4   & 44.48 & \textbf{48.28} \\
\midrule
\multirow{2}[1]{*}{\textbf{Reinforce++}} & CD-34 & 72.02 & \textbf{74.18} \\
      & CD-4   & 43.36 & \textbf{45.61} \\
\bottomrule
\end{tabular}
}
\label{tab:algo}
    \end{minipage}
    \hspace{6pt}
    \begin{minipage}[c]{0.7\linewidth}
        \centering
    \includegraphics[width=\linewidth]{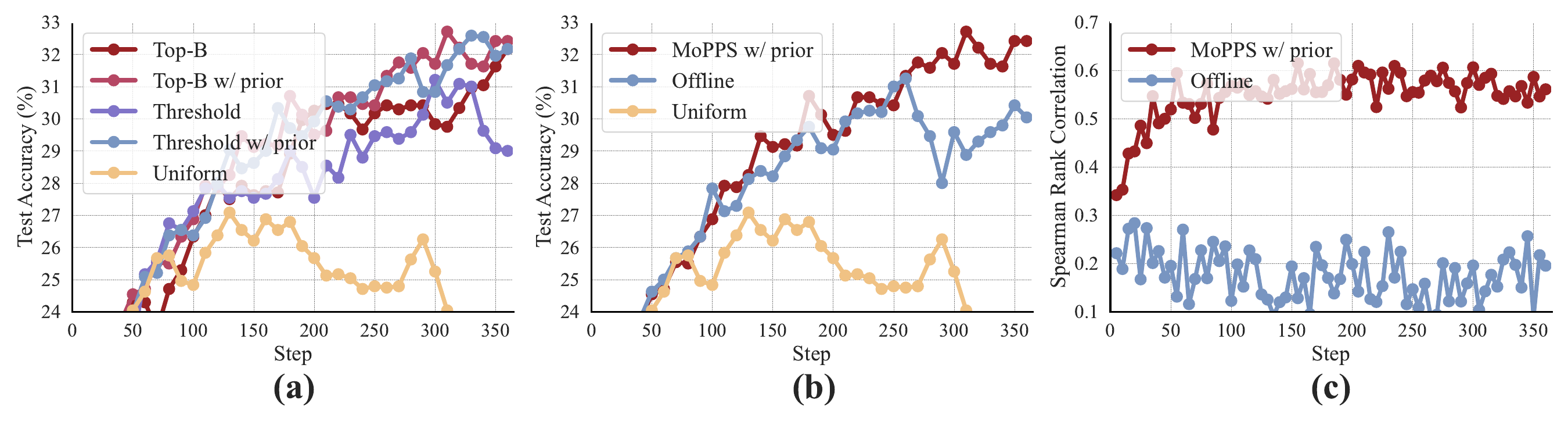}
    \vspace{-28pt}
    \caption{
Ablation studies on selection strategies, prior knowledge, and online posterior updates on MATH using R1-Distill-Qwen-1.5B.
(a) Comparison of $\text{Top-}\mathcal{B}$ and Threshold selection strategies, with and without prior knowledge.  
(b) Training performance of offline selection (prior only) versus MoPPS with prior and online posterior updates.  
(c) Spearman rank correlation over training steps for offline variant and MoPPS with prior.
}
    \label{fig:ablate}
    \end{minipage}%
    
    \vspace{-5pt}
\end{figure*}

\begin{tcolorbox}[takeaway]
MoPPS is compatible with various RL algorithms, consistently improving training efficiency.
Moreover, it is flexible and effective across different selection strategies, benefits further from prior knowledge, and crucially relies on online posterior updates.
\end{tcolorbox}

\section{Related Works}\label{sec:related}

\paragraph{RL Finetuning of LLMs.}
Reinforcement learning has emerged as a powerful paradigm for aligning LLMs with desired behaviors.
Reinforcement Learning with Human Feedback (RLHF), has demonstrated remarkable success in improving instruction-following capabilities and ensuring the safety of LLMs~\citep{dong2024rlhf, dai2023saferlhf, sun2023aligningrlhf, zheng2023secretsrlhf}. 
More recently, advances show that Reinforcement Learning with Verifiable Rewards (RLVR)~\citep{jaech2024openai, guo2025deepseek, team2025kimi, chu2025sftrl, tinyzero} significantly enhances LLMs' reasoning capabilities in structured domains, such as mathematics, where reward signals can be automatically verified.
Among RL algorithms, PPO~\citep{schulman2017proximal} remains a widely adopted method. GRPO~\citep{shao2024deepseekmath} has gained traction more recently as it eliminates the computationally expensive value network in PPO by estimating advantages using a lightweight group-normalized manner.
Several recent works further improve these algorithms by avoiding bias and training collapse, reducing overhead, and enhancing sample efficiency~\citep{yuan2025vcppo, yue2025vapo, liu2025understanding, yu2025dapo, kazemnejad2024vineppo, hu2025reinforce++}.
Moreover, many efforts push the performance frontier across diverse domains and model scales~\citep{luo2025deepscaler, dang2025reinforcement, luo2025deepcoder, zeng2025simplerl, meng2025mm, xu2024llava}, while others provide infrastructure for scalable RL-based LLM training~\citep{sheng2024hybridflow}.

\paragraph{Prompt Selection for RL Finetuning.}
Data curation has emerged as a promising approach to improve training efficiency of RL finetuning.
Offline filtering methods select prompts before training based on criteria like difficulty or diversity~\citep{ye2025limo, li2025limr, wen2025light, hu2025open, yang2024qwen2math, fatemi2025concise, wang2025oneexample}, but often incur extra overhead for prompt assessment and lack adaptivity to evolving training dynamics, as discussed in Sec.~\ref{sec:offline}.
To overcome this, recent studies~\citep{yu2025dapo, zhang2025srpo} have explored online sampling, which selects informative prompts based on the current policy.
Many approaches perform per-step selection by filtering ineffective prompts~\citep{yu2025dapo, liu2025prorl, cui2025process, meng2025mm} or prioritizing moderate difficulty~\citep{bae2025online}, which reduces training steps but requires additional costly LLM evaluations.
Other methods apply per-epoch filtering~\citep{zhang2025srpo, zheng2025act} to avoid per-step evaluation, but weakens adaptability.
SEC~\citep{chen2025self} avoids real evaluations to construct curriculum by estimating category advantages, but it depends on predefined prompt categories.
In contrast, our method enables efficient step-wise prompt selection by amortizing prompt evaluation with posterior-based prediction.
This allows MoPPS to achieve training acceleration, while completely avoiding additional LLM inference cost.

\section{Conclusion}
This work introduces Model Predictive Prompt Selection, a lightweight and effective framework for accelerating RL finetuning of reasoning models through online prompt selection.
By modeling prompt success rates as latent variables and applying recursive Bayesian updates, MoPPS efficiently predicts prompt difficulty without extra LLM inference, enabling reliable and adaptive prompt selection during training.
Experiments on diverse reasoning tasks show that MoPPS consistently improves training efficiency over uniform selection and matches or outperforms evaluation-heavy methods, while substantially reducing LLM rollout costs.

\paragraph{Limitations and Future Work.}
MoPPS adopts a Bernoulli bandit formulation, which assumes approximately binary reward signals.
This work has validated its effectiveness on richer reward types such as format rewards.
Future work will further explore extending MoPPS to handle more complex reward structures like process-based rewards, thereby broadening its applicability to a wider range of scenarios.


\begin{acks}
This work was supported by the National Natural Science Foundation of China (NSFC) with the Number \# 62306326, the National Key R\&D Program of China under Grant 2018AAA0102801, and the Fundamental and Interdisciplinary Disciplines Breakthrough Plan of the Ministry of Education of China (JYB2025XDXM503).
We thank all reviewers for their insightful comments and constructive suggestions.
\end{acks}


\bibliographystyle{ACM-Reference-Format}
\balance
\bibliography{sample-base}

\appendix

\section*{Appendix Overview}

\begin{figure*}[ht]
    \centering
    \includegraphics[width=0.9\linewidth]{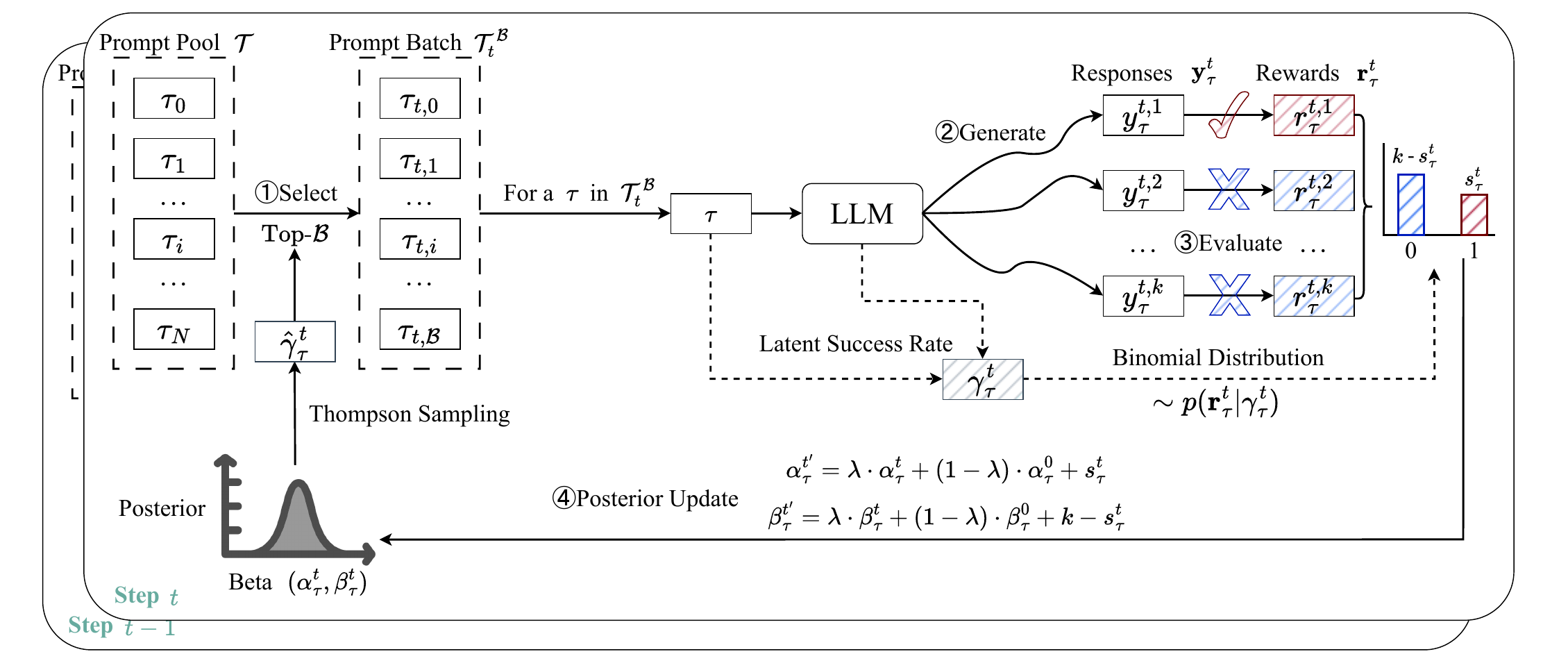}
    \caption{Illustration of the prompt selection, LLM generation, prompt evaluation, and posterior update process during RL finetuning.
At each step, a batch of prompts is actively selected based on the predicted success rates $\hat{\gamma}_\tau^t$ using Thompson Sampling.
Then, the model generates multiple responses per prompt and receives binary rewards drawn from a Binomial distribution parameterized by the latent success rate $\gamma_\tau^t$.
These observations are used to update the Beta posterior in a recursive manner.}

    \label{fig:notation}
\end{figure*}
This appendix provides additional details and analyses which is organized as follows:
\begin{itemize}[leftmargin=10pt]
  \item \textbf{Appendix~\ref{appsec:notations} (Notations Explanation):}  
  illustrates the key notations used throughout the paper by walking through one RL finetuning step of reasoning models.
  \item \textbf{Appendix~\ref{appsec:proof} (Theoretical Proof):}  
  Presents a formal proof and analysis of the success rate estimation bound in MoPPS.
  \item \textbf{Appendix~\ref{appsec:implementation} (Implementation Details):}  
  provides comprehensive information on the experimental setup, including datasets, reward functions, backbones, and training configurations for both baselines and MoPPS.
  \item \textbf{Appendix~\ref{appsec:results} (Additional Results):}  
  reports evaluation results across benchmarks, rollout efficiency analysis, ablation studies (e.g., temporal discounting), and various behavioral analyses (e.g., prompt length, response length, and reduction of ineffective prompts).
  \item \textbf{Appendix~\ref{appsec:dataexample} (Data Examples):}  
  shows representative prompt templates across tasks.
\end{itemize}

\section{Notations Explanation}\label{appsec:notations}
To assist readers unfamiliar with RL finetuning of reasoning models, we provide an overview of a training step in Fig.~\ref{fig:notation} and clarify key notations used throughout the paper.

At each RL training step $t$, a batch of prompts $\mathcal{T}^\mathcal{B}_t$ is firstly sampled from the full prompt pool $\mathcal{T}$ according to the prompt selection method, i.e., Thompson Sampling combined with $\text{Top-}\mathcal{B}$ selection in the proposed MoPPS.
Then, for each prompt $\tau$ in the batch, the current policy (LLM) takes it as input and generates multiple responses $\bm y^t_\tau = \{y_\tau^{t,j}\}_{j=1}^k$ via autoregressive decoding.
Each response $y_\tau^{t,j}$ is evaluated against the ground-truth answer and given a binary reward $r_{\tau}^{t,j}$: $1$ for correct, and $0$ otherwise.
We denote the reward vector for prompt $\tau$ as $\bm r^t_{\tau} = \{r_{\tau}^{t,j}\}_{j=1}^k$.
As we stated in the main text, we associate each prompt $\tau$ with a success rate $\gamma_\tau^t \in [0,1]$ and treat it as the latent variable, which reflects the chance of $\tau$'s problem-solving success under the current policy.
Hence, the likelihood of observing $\bm{r}^t_\tau$ given $\gamma^t_\tau$ follows a binomial form:
\begin{equation}
\begin{aligned}
&p(r_{\tau}^{t,i})=(\gamma^t_\tau)^{[r_{\tau}^{t,i}=1]}\cdot(1-\gamma^t_\tau)^{[r_{\tau}^{t,i}=0]}\Rightarrow\\
&p(\bm r^t_\tau \mid \gamma^t_\tau)
= \binom{k}{s^t_\tau}\cdot(\gamma^t_\tau)^{s^t_\tau}\cdot(1 - \gamma^t_\tau)^{k - s^t_\tau} \
\text{with}
\
s^t_\tau \triangleq \sum_{j=1}^k r_\tau^{t,j}.
\end{aligned}
\label{appeq:likelihood}
\end{equation}

Given these rewards, we adopt a recursive Bayesian mechanism to update the posterior distribution over the success rate $\gamma^t_\tau$, as detailed in Sec.~\ref{sec:bayesian}.

Examples of prompts $\tau$ for different tasks are provided in Appendix~\ref{appsec:dataexample}, and the reward function details are described in Appendix~\ref{appsec:implementation}.
Amortized prompt evaluation in this work refers to the use of a surrogate model to simulate the probabilistic outcome of LLM inference given a specific prompt.

\section{Theoretical Proof}\label{appsec:proof}

\posteriormeanbound*

\begin{proof}

By the update rule in Eq.~(\ref{eq:betaupdate}), at each time a task $\tau$ is selected, the total pseudo count evolves as:
\begin{equation}
     C_\tau^t := \alpha_\tau^{t'}+\beta_\tau^{t'} = \lambda C_\tau^{t-1} + (1-\lambda)(\alpha_\tau^0+\beta_\tau^0)+k ,\quad t\geq0.
\end{equation}
with $C_\tau^{-1}:=\alpha_\tau^0+\beta_\tau^0$.

To analyze the monotonicity of $\{C_\tau^t\}_{t\ge0}$, we consider an auxiliary sequence $\{x_n\}_{n\ge0}$ that follows the same update rule at every step $n$.
Specifically, define the recurrence
\begin{equation}
x_n = \lambda x_{n-1} + (1 - \lambda) x_{-1} + k,
\end{equation}
which has the closed-form:
\begin{equation}
x_n = \frac{1 - \lambda^n}{1 - \lambda}k + x_{-1}.
\end{equation}
where $x_{-1}\in \mathbb{R}$ is a fixed constant.

This shows that $\{x_n\}_{n\ge0}$ is strictly increasing since $k > 0,\ m>0$ and $\lambda \in (0,1)$.

The actual sequence $\{C_\tau^t\}_{t\ge0}$ is only updated at time steps when task $\tau$ is selected, and remains unchanged otherwise. Each time it is updated, it follows the same rule as $\{x_n\}_{n\ge0}$.
Therefore, $\{C_\tau^t\}_{t\ge0}$ is non-decreasing in $t$.

Define the empirical success probability as:
\begin{equation}
    \bar{s}^t_{\tau}=\frac{s^t_{\tau}}{k},
\end{equation}
and define the prior posterior mean estimate as:
\begin{equation}
    \bar{\gamma}^{-1}_\tau := \frac{\alpha^{0}_\tau}{\alpha^{0}_\tau + \beta^{0}_\tau}
\end{equation}

Given the definition of $C_\tau^t$, the posterior mean can be written as a convex combination:
\begin{equation}
\begin{aligned}
    \bar{\gamma}^{t}_\tau&=\frac{\lambda \alpha^{t}_{\tau} + (1 - \lambda) \alpha_{\tau}^{0}  + s^t_\tau}{C_\tau^t}\\
    &=\frac{\lambda C_\tau^{t-1} \cdot \bar{\gamma}^{t-1}_\tau + (1 - \lambda) (\alpha_{\tau}^{0} + \beta_{\tau}^{0})\cdot  \bar{\gamma}_\tau^{-1} + k\cdot \bar{s}^t_{\tau}}{C_\tau^t}.
\end{aligned}
\end{equation}
or equivalently,
\begin{equation}
    \bar{\gamma}^{t}_\tau=w_1\cdot \bar{\gamma}^{t-1}_\tau + w_2\cdot  \bar{\gamma}_\tau^{-1} + w_3 \cdot \bar{s}^t_{\tau}
\end{equation}
where $$w_1=\frac{\lambda C_\tau^{t-1}}{C_\tau^t},\  w_2=\frac{(1 - \lambda) (\alpha_{\tau}^{0} + \beta_{\tau}^{0})}{C_\tau^t},\  w_3=\frac{k}{C_\tau^t}, \ w_1+w_2+w_3=1$$

We now bound the estimation error:
\begin{equation}
    |\bar{\gamma}^{t}_\tau-\gamma^{t}_\tau|\le w_1\cdot |\bar{\gamma}^{t-1}_\tau-\gamma^{t}_\tau|+w_2\cdot |\bar{\gamma}_\tau^{-1}-\gamma^{t}_\tau|+w_3\cdot |\bar{s}^t_{\tau}-\gamma^{t}_\tau|.
\end{equation}

Apply the triangle inequality to the first term:
\begin{equation}
    |\bar{\gamma}^{t-1}_\tau-\gamma^{t}_\tau|\le |\bar{\gamma}^{t-1}_\tau-\gamma^{t-1}_\tau| + |\gamma^{t-1}_\tau-\gamma^{t}_\tau| \le \epsilon_{t-1} + \delta.
\end{equation}

Since $\bar{\gamma}_\tau^{-1}, \gamma^t_\tau \in [0,1]$, 
the second term satisfies:
\begin{equation}
    |\bar{\gamma}_\tau^{-1}-\gamma^{t}_\tau|\le 1.
\end{equation}

Finally, by Hoeffding’s inequality~\citep{hoeffding1994probability} for the binomial mean $\bar{s}^t_\tau \sim \text{Binomial}(k, \gamma^t_\tau)/k$:
\begin{equation}
\mathbb{P}\left(|\bar{s}^t_\tau - \gamma^t_\tau| \ge \eta\right) \le 2 \exp(-2 k \eta^2).
\end{equation}

Combining the above, with probability at least $1-2 \exp(-2k\eta^2)$:
\begin{equation}
    \epsilon_t = |\bar{\gamma}^{t}_\tau-\gamma^{t}_\tau|\le w_1\cdot(\epsilon_{t-1} + \delta) + w_2\cdot 1+ w_3\cdot \eta
\end{equation}

Since $\{C_\tau^t\}_{t\ge0}$ is non-decreasing in $t$, it follows that
\begin{equation}
    w_1=\lambda\cdot\frac{C_\tau^{t-1}}{C_\tau^t}\le\lambda.
\end{equation}

Assume $k\ge (\alpha_{\tau}^{0} + \beta_{\tau}^{0})$, which is reasonable since usually $(\alpha_{\tau}^{0}, \beta_{\tau}^{0})=(1,1)$. Thus,  we have $C_\tau^1=k+C_\tau^0\ge 2C_\tau^0$. So: 
\begin{equation}
    w_2=(1 - \lambda) \cdot \frac{(\alpha_{\tau}^{0} + \beta_{\tau}^{0})}{C_\tau^t}\le(1 - \lambda) \cdot \frac{C_\tau^0}{C_\tau^1}\le \frac{1-\lambda}{2}.
\end{equation}

Therefore, since 
$w_3=\frac{k}{C_\tau^t}<1$
,
with probability at least $1-2 \exp(-2k\eta^2)$, the error bound can be derived as:
\begin{equation}
    |\bar{\gamma}^{t}_\tau-\gamma^{t}_\tau|< \lambda\cdot (\epsilon_{t-1} + \delta) + \frac{(1-\lambda)}{2} +\eta.
\end{equation}

This completes the proof.
\end{proof}

\paragraph{Implication under Near-Stationarity.}
In nearly stationary regimes where model updates are small and $\delta \approx 0$, setting $\lambda \to 1$ simplifies the error bound to
\begin{equation}
|\bar{\gamma}^{t}_\tau - \gamma^{t}_\tau| \lessapprox \epsilon_{t-1}+\eta.
\end{equation}
This suggests that the estimation error is bounded by the previous-step error $\epsilon_{t-1}$ and the sampling tolerance $\eta$.
Crucially, $\eta$ can be made arbitrarily small by increasing the number of response samples, thereby controlling the approximation error.
As a result, when $\eta$ remains small, the overall estimation error $\epsilon_t$ approximately contracts over time, indicating increasingly accurate posterior inference as training stabilizes.

\section{Implementation Details}
\label{appsec:implementation}

\subsection{Tasks}

\subsubsection{Mathematics}

\paragraph{Training Dataset.}
We train models on the training split of the MATH dataset~\citep{hendrycksmath2021}, which contains 7,500 competition-level math problems.
Following \citet{sheng2024hybridflow}, we use the dataset hosted at \url{https://huggingface.co/datasets/DigitalLearningGmbH/MATH-lighteval}.

\paragraph{Evaluation Benchmarks.}
We evaluate on a suite of math benchmarks: AIME24, AMC23, MATH500~\citep{lightman2023let}, Minerva Math~\citep{lewkowycz2022solving}, and OlympiadBench~\citep{he2024olympiadbench}, using the datasets provided by DeepScaler~\citep{luo2025deepscaler}.
Training curves are plotted using performance on AIME24.

\paragraph{Reward Function.}
Following the default setup in verl~\citep{sheng2024hybridflow}, we use a binary reward function that assigns a reward of $1$ for a correct answer and $0$ otherwise.

\subsubsection{Planning}

\paragraph{Training Dataset.}
We adopt the Countdown Number Game, which requires combining given numbers using basic arithmetic operations to reach a target value.
Specifically, we adopt a 2,000-problem subset of the Countdown-34 dataset from \url{https://huggingface.co/datasets/Jiayi-Pan/Countdown-Tasks-3to4} for training.
In Countdown-34, each problem provides either 3 or 4 source numbers.

\paragraph{Evaluation Benchmarks.}
Evaluation is conducted on two benchmarks: a 512-problem held-out split from Countdown-34 (CD-34), and a 512-problem subset from Countdown-4 (CD-4), a more challenging variant from \url{huggingface.co/datasets/Jiayi-Pan/Countdown-Tasks-4}. 
Unlike CD-34, CD-4 provides 4 source numbers per problem, which significantly increases the search space and problem difficulty.
Training curves are plotted using CD-34.

\paragraph{Reward Function.}
Following \cite{tinyzero}, we include a format term in the reward function:
\begin{equation}
r = 
\begin{cases}
1 & \text{if response is correct}, \\
0.1 & \text{if response is incorrect but with correct formatting}, \\
0 & \text{otherwise}.
\end{cases}
\end{equation}

\subsubsection{Geometry}

\paragraph{Training Dataset.}
We train on the 2,101-problem training split of the Geometry3k dataset~\citep{lu2021inter, geometry3k_dataset}, available at \url{https://huggingface.co/datasets/hiyouga/geometry3k}.
Each problem in Geometry3k consists of a geometric diagram and an accompanying natural language question, often requiring multi-step spatial or logical reasoning.

\paragraph{Evaluation Benchmarks.}
Evaluation is conducted on the official 601-problem test split.

\paragraph{Reward Function.}
Following verl~\citep{sheng2024hybridflow}, we use the same reward function as in Countdown.

Appendix~\ref{appsec:dataexample} presents data examples from each of the training datasets.

\subsection{Models}
We adopt six models spanning diverse types and sizes.
All models are obtained from their official Hugging Face repositories and used as released:
\begin{itemize}[leftmargin=10pt]
    \item DeepSeek-R1-Distill-Qwen-1.5B: \url{https://huggingface.co/deepseek-ai/DeepSeek-R1-Distill-Qwen-1.5B};
    \item DeepSeek-R1-Distill-Qwen-7B: \url{https://huggingface.co/deepseek-ai/DeepSeek-R1-Distill-Qwen-7B};
    \item Qwen2.5-3B: \url{https://huggingface.co/Qwen/Qwen2.5-3B};
    \item Qwen2.5-7B: \url{https://huggingface.co/Qwen/Qwen2.5-7B};
    \item Qwen2.5-VL-3B-Instruct: \url{https://huggingface.co/Qwen/Qwen2.5-VL-3B-Instruct};
    \item Qwen2.5-VL-7B-Instruct: \url{https://huggingface.co/Qwen/Qwen2.5-VL-7B-Instruct}.
\end{itemize}

\subsection{Training Details}

We adopt the widely used GRPO~\citep{shao2024deepseekmath}, implemented in the verl framework~\citep{sheng2024hybridflow}, as our default RL algorithm.

At each training step, $k=8$ responses per prompt are sampled to estimate advantages, using temperature 1.0 and $\texttt{top\_p} = 1.0$.
Evaluation is based on \texttt{pass@1}, computed from 16 independent generations per prompt with temperature 0.6 and $\texttt{top\_p} = 0.95$, following \citet{luo2025deepscaler}.
We disable the KL penalty ($\beta=0$) for MATH and Countdown, following \citet{yu2025dapo}, but retain it for Geometry3k with $\beta=0.1$ (3B) and $\beta=0.3$ (7B) to ensure training stability.
Training batch sizes $\mathcal{B}$ are set to 256 for MATH and Countdown with mini-batch sizes of 128 and 64, repectively, and 512 for Geometry3k with a mini-batch size of 256.
The maximum response length is 8192 for MATH, and 1024 for Countdown and Geometry3k.
Entropy regularization is applied with coefficient 0.001, following~\citet{luo2025deepscaler}.
Optimization is performed using Adam~\citep{kingma2014adam} with a learning rate of $1\mathrm{e}{-6}$, beta $(0.9,0.999)$, no warm-up, and weight decay $0.01$.
The Clip-Higher strategy from \citet{yu2025dapo} is applied, which decouples clipping ranges, with $\epsilon_{\text{low}} = 0.2$ and $\epsilon_{\text{high}} = 0.28$.
All experiments are conducted on 8 NVIDIA A100 or H100 80GB GPUs.

\paragraph{Oracle baseline: Dynamic Sampling}
We use the verl implementation, which repeatedly samples candidate prompts, queries LLM rollouts, and filters out prompts with zero reward standard deviation, until $\mathcal{B}$ prompts are selected.

\paragraph{MoPPS}
We set the Beta prior as $(\alpha^0, \beta^0) = (1, 1)$, target success probability $\gamma^* = 0.5$ for all training tasks.
The decay factor $\lambda$ is set to $0.5$ for Countdown and Geometry3k, $1$ for MATH.
Since the primary objective of this work is to address the two research questions, and the current performance is satisfactory, we do not emphasize performance optimization.
We leave extensive hyperparameter tuning for future work.
The candidate batch size $\hat{\mathcal{B}}$ is set to $16\times \mathcal{B}$, which is relatively large.
For Countdown and Geometry3k, this means the candidate prompt batch $\mathcal{T}^{\hat{\mathcal{B}}}$ effectively covers the entire pool $\mathcal{T}$.

\paragraph{MoPPS variants}
All variants share the same base setups unless otherwise specified:
\begin{itemize}[leftmargin=10pt]
    \item Threshold: samples prompts with predicted success rates falling within a fixed interval, i.e., $\gamma_{min}\le\hat{\gamma}_\tau\le\gamma_{max}$, with $\gamma_{min}=0.3,\ \gamma_{max}=0.7$ which is the best setup in \citet{bae2025online}.
    \item MoPPS w/ prior: incorporates prior knowledge by pre-evaluating all prompts using the base model. For each prompt, 8 responses are generated, and the Beta parameters ${\alpha^0, \beta^0}$ are initialized as $\alpha^0 = 1 + 3 \times$ (number of correct responses), $\beta^0 = 1 + 3 \times$ (number of incorrect responses).
    \item Offline: uses only prior knowledge for prompt selection without updating the posterior during training, i.e. $\alpha_\tau^{t+1}=\alpha_\tau^t=\alpha_\tau^0,\ \beta_\tau^{t+1}=\beta_\tau^t=\beta_\tau^0$;
    \item MoPPS with PPO ($k=1$): only one response is generated per task, leading to sparse feedback. To maintain consistent hyperparameter settings and amplify signal strength, the posterior update scales $s_\tau^t$ and $k-s_\tau^t$ by 8, i.e., the default $k$ value in GRPO, in Eq.~\ref{eq:betaupdate}. This adjustment is not required for PPO ($k=8$) or Reinforce++, as both employ group generation.
\end{itemize}

\begin{table}[t]
  \centering
  \caption{Evaluation results on Countdown. Models trained with different prompt selection methods are evaluated on two benchmarks: Countdown-34 (CD-34) and Countdown-4 (CD-4). MoPPS consistently outperforms Uniform without requiring any additional LLM inference, and matches or surpasses DS while using substantially fewer rollouts.}
  \vspace{-5pt}
  \resizebox{\linewidth}{!}{
    \begin{tabular}{ccccccc}
    \toprule
    \multirow{2}[2]{*}{\textbf{Method}} & \multicolumn{3}{c}{\textbf{Qwen2.5-3B}} & \multicolumn{3}{c}{\textbf{Qwen2.5-7B}} \\
          & \textbf{CD-34} & \textbf{CD-4} & \textbf{Rollouts} & \textbf{CD-34} & \textbf{CD-4} & \textbf{Rollouts} \\
    \midrule
    \textbf{Uniform} & 69.87  & 39.42  & \textbf{246k} & 77.84  & 53.27  & \textbf{246k} \\
    \textbf{HR} & 70.19  & 42.10& \textbf{246k} &78.15  & 54.54 & \textbf{246k}\\
    \textbf{DS (Oracle)} & \textbf{74.95} & \textbf{47.67} & 1141k & \underline{81.26}  & \textbf{60.77} & 1006k \\
\rowcolor{myhighlight}
    \textbf{MoPPS (Ours)} & \underline{74.46} & \underline{47.02}  & \textbf{246k} & \textbf{82.18} & \underline{59.16}  & \textbf{246k} \\
    \bottomrule
    \end{tabular}
    }
    \vspace{-10pt}
  \label{tab:cdeval}
\end{table}

\begin{table}[t]
  \centering
  \caption{Evaluation results on Geometry. Models trained with different prompt selection methods are evaluated on the test split of Geometry3k (Geo3k test). MoPPS consistently outperforms Uniform without requiring any additional LLM inference, and surpasses DS while using substantially fewer LLM-generated rollouts.}
  \vspace{-10pt}
  \resizebox{\linewidth}{!}{
    \begin{tabular}{ccccc}
    \toprule
    \multirow{2}[2]{*}{\textbf{Method}} & \multicolumn{2}{c}{\textbf{Qwen2.5-VL-3B-Instruct}} & \multicolumn{2}{c}{\textbf{Qwen2.5-VL-7B-Instruct}} \\
          & \textbf{Geo3k test} & \textbf{Rollouts} & \textbf{Geo3k test} & \textbf{Rollouts} \\
    \midrule
    \textbf{Uniform} & 40.69 & \textbf{492k} & 46.22 & \textbf{328k} \\
    \textbf{HR }& 40.44 & \textbf{492k} & 46.52 & \textbf{328k} \\ 
    \textbf{DS (Oracle)} & \underline{44.33} & 1262k & \underline{48.11} & 782k \\
\rowcolor{myhighlight}
    \textbf{MoPPS (Ours)} & \textbf{45.15} & \textbf{492k} & \textbf{48.24} & \textbf{328k} \\
    \bottomrule
    \end{tabular}
    }
  \vspace{-10pt}
  \label{tab:geoeval}
\end{table}

\begin{table*}[h]
\renewcommand{\arraystretch}{1.2}
  \centering
  \vspace{-10pt}
  \caption{Evaluation across mathematics benchmarks with maximum response length 32,768. ‘+’ indicates finetuning with the corresponding method. Accuracy is computed as the average \texttt{pass@1} over 16 independent generations per problem. ‘Avg.’ denotes average accuracy across benchmarks, and ‘Rollouts’ indicates the number of rollout samples during finetuning. \textbf{Bold} indicates the best result; \underline{underlined} indicates the second best. ‘MoPPS w/ prior’ means incorporating prior knowledge.}
  \resizebox{0.7\linewidth}{!}{
    \begin{tabular}{rccccccc}
    \toprule
    \multicolumn{1}{c}{\textbf{Method}} & \textbf{AIME24} & \textbf{AMC23} & \textbf{MATH500} & \textbf{Minerva.} & \textbf{Olympiad.} &  \textbf{Avg. $\uparrow$} &\textbf{Rollouts $\downarrow$} \\
    \midrule
    \textbf{R1-Distill-Qwen-1.5B} & 28.12 & 61.67 & 83.18 & 26.54 & 43.33 & 48.57&- \\
    \textbf{+Uniform} & 31.46 & 67.70  & 84.22 & 27.94 & 45.06 & 51.28& \textbf{737k}\\
    \textbf{+HR} & 30.42 & 66.49  & 84.30 & 27.53 & 45.06 & 50.76& \textbf{737k}\\
    \textbf{+DS (Oracle)}   & \underline{32.92} & \underline{69.95} & \textbf{86.44} & \textbf{30.26} & \textbf{49.66} & \textbf{53.85}&2933k \\
\rowcolor{myhighlight}
\textbf{+MoPPS (Ours)} & \textbf{35.42} & \textbf{70.33} & \underline{85.25} & \underline{29.73} & \underline{46.68} & \underline{53.48}& \textbf{737k} \\
\rowcolor{gray!10}
    \textbf{+MoPPS w/ prior (Ours)} & 36.67 & 69.58 & 85.71  & 29.57 & 47.76 & 53.86& 797k \\

    \bottomrule
    \end{tabular}
    }
  \label{tab:math32k}
\end{table*}

\section{Additional Results}\label{appsec:results}

\subsection{Evaluation Results}

We evaluate the trained checkpoints on corresponding benchmarks to assess the final performance of different prompt selection methods.
Table~\ref{tab:matheval}, Table~\ref{tab:cdeval}, and Table~\ref{tab:geoeval} present results on mathematics, planning, and geometry tasks, respectively.
The results show that both MoPPS and DS consistently outperform Uniform sampling in terms of final accuracy and training efficiency.
Notably, MoPPS matches or surpasses DS while requiring significantly fewer rollouts, as shown in Fig.~\ref{fig:rolloutcurve}, leading to substantial reductions in LLM inference cost.

We also conduct an out-of-distribution evaluation by testing MATH-trained checkpoints, which are trained with a max response length of 8k, using a much larger response length of 32k.
As shown in Table~\ref{tab:math32k}, MoPPS continues to outperform Uniform and matches Dynamic Sampling in this setting, benefiting substantially from the increased response length.
This indicates that MoPPS can generalize to large-scale training scenarios and achieve better performance.

\subsection{Other Analysis}
\begin{figure}[h]
    \centering
    \includegraphics[width=\linewidth]{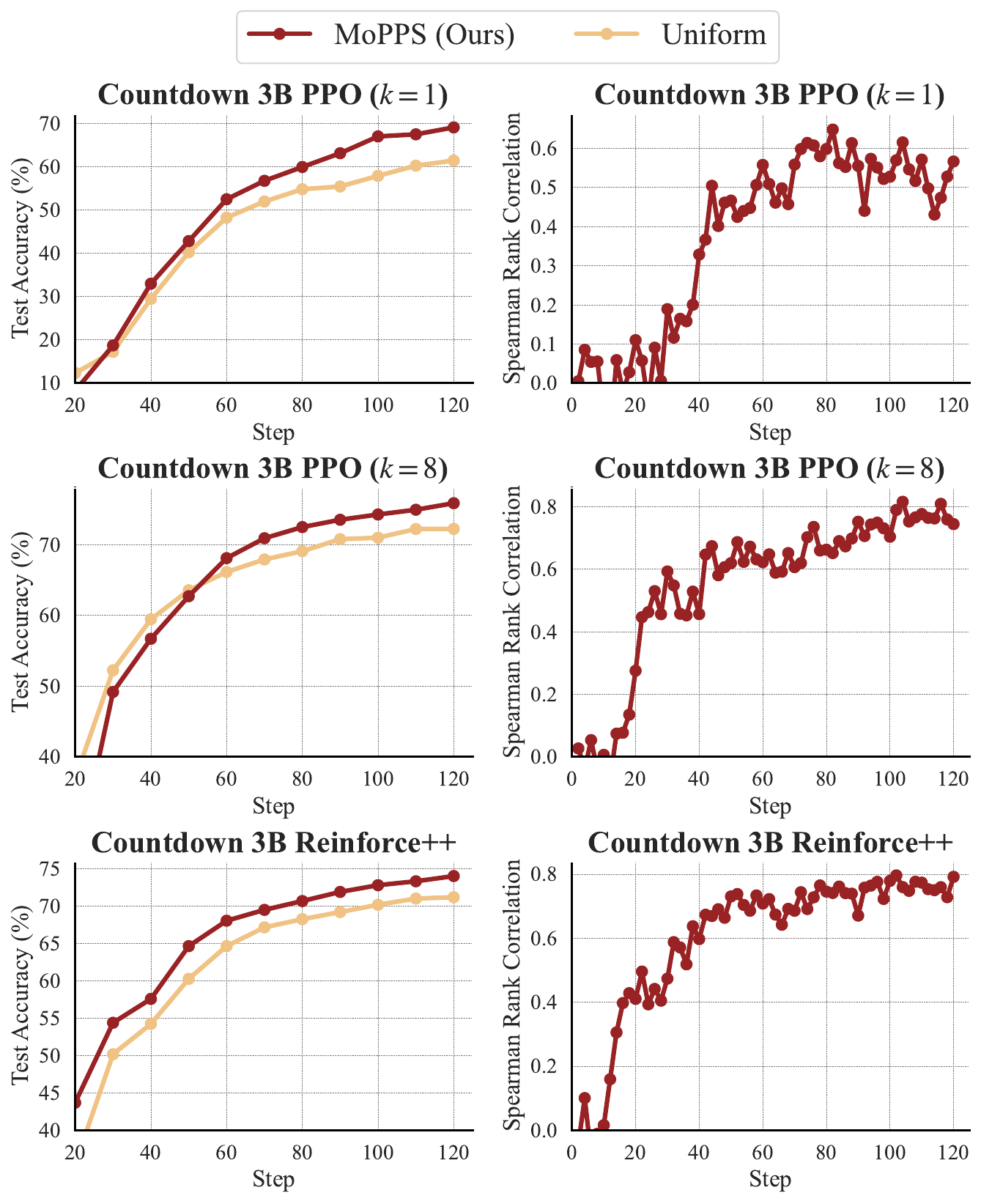}
    \vspace{-25pt}
    \caption{Training curves on Countdown with PPO (both $k=1$ and $k=8$) and Reinforce++. 
MoPPS  consistently accelerates convergence and improves final accuracy compared to uniform prompt selection across different RL algorithms and sampling number settings.
The Spearman rank correlation further shows that MoPPS reliably predicts prompt difficulty across varied setups.}
\vspace{-5pt}
    \label{fig:algocurve}
\end{figure}

\subsubsection{Algorithm Compatibility.}
MoPPS is compatible with various RL algorithms beyond GRPO.
Fig.~\ref{fig:algocurve} presents the training curves of MoPPS integrated with two alternative algorithms, \textbf{PPO}\citep{schulman2017proximal} and \textbf{Reinforce++}~\citep{hu2025reinforce++}, on the Countdown task.
For PPO, we test both the standard single-sample setup ($k=1$) and the group sampling variant ($k=8$), and MoPPS consistently improves convergence speed and final \texttt{pass@1} accuracy under both settings.
The improvements are also evident when combined with Reinforce++, further confirming that MoPPS is algorithm-agnostic and broadly applicable across different RL finetuning pipelines.
Moreover, Spearman rank correlation analysis demonstrates that MoPPS reliably predicts prompt difficulty across these varied algorithmic and sampling configurations.

\begin{figure}[t]
    \centering
    \includegraphics[width=\linewidth]{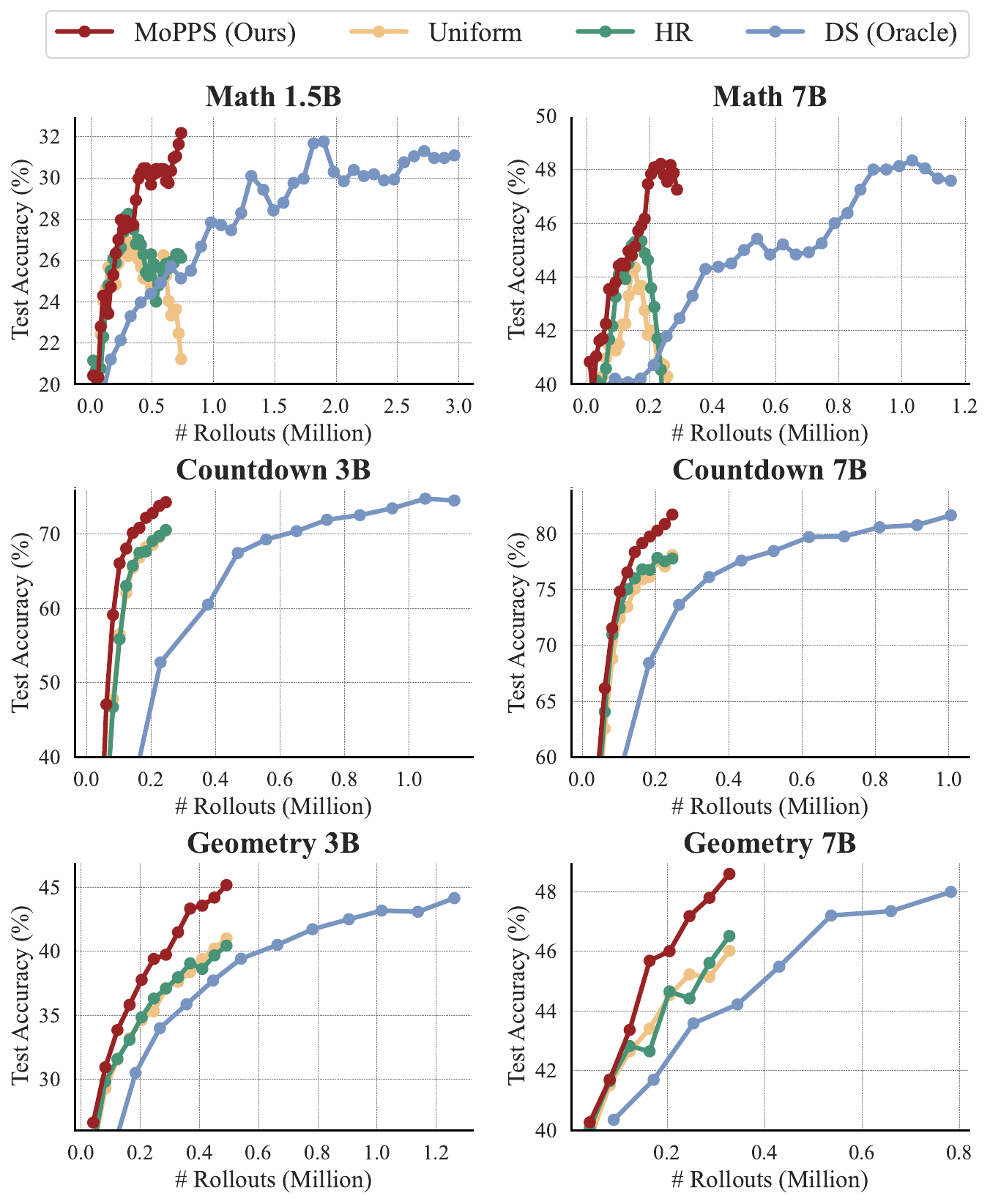}
    \vspace{-20pt}
    \caption{Training curves plotted against the number of rollouts generated by LLM during training. MoPPS achieves comparable accuracy with significantly fewer rollouts than DS.}
    \vspace{-15pt}
    \label{fig:rolloutcurve}
\end{figure}

\subsubsection{Rollout Efficiency.}
As shown in Fig.~\ref{fig:performance}, both MoPPS and DS accelerate training and improve final performance compared to Uniform.
While MoPPS matches DS in terms of training steps, this metric overlooks actual computational cost.
DS requires explicit evaluation of more prompts via over-sampling, incurring substantial LLM inference overhead.
To better reflect efficiency, we plot performance against the number of rollouts generated by LLM during training in Fig.~\ref{fig:rolloutcurve}.
MoPPS achieves comparable performance with far fewer rollouts, demonstrating superior sample efficiency.

\begin{figure*}[h]
    \centering
    \includegraphics[width=\linewidth]{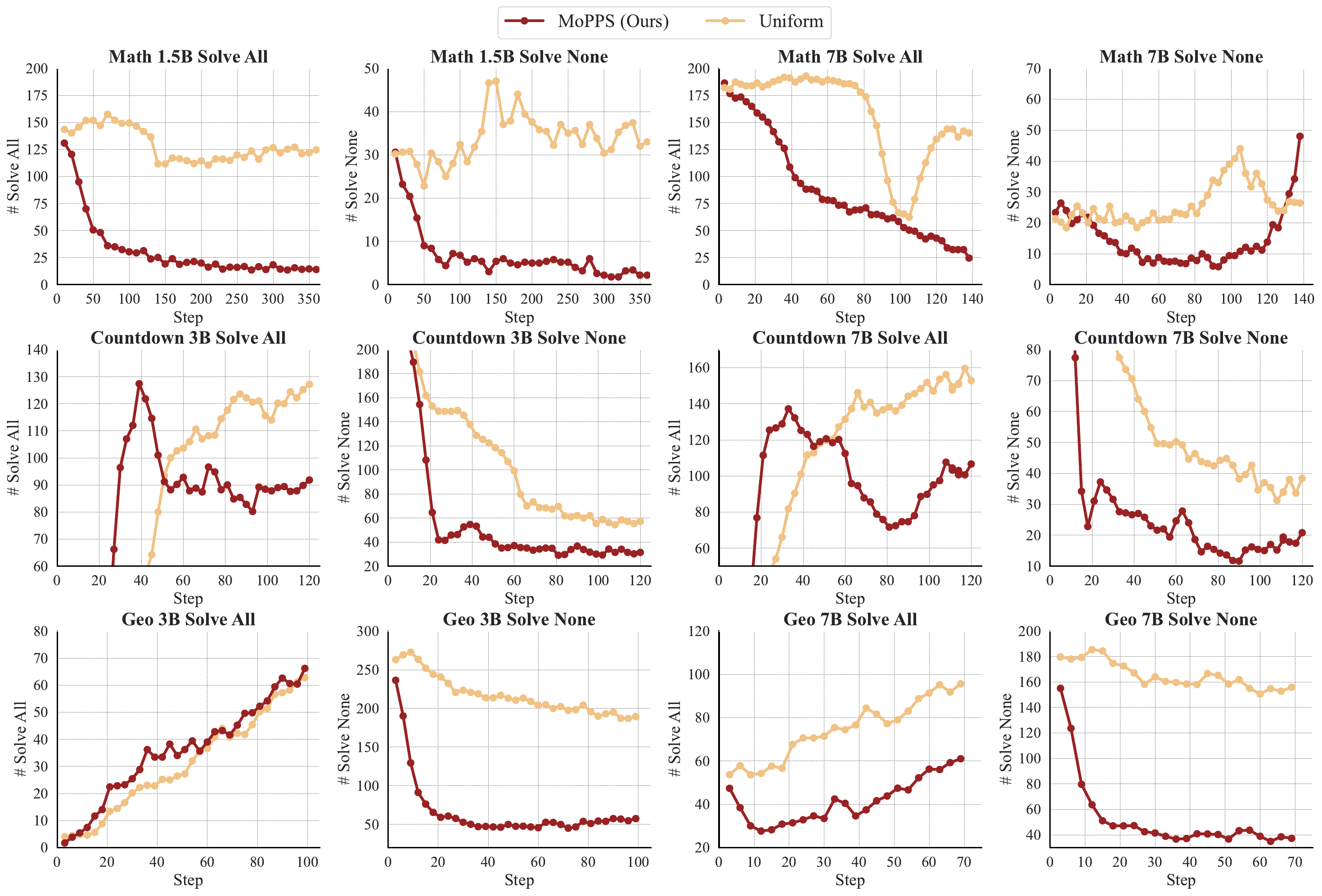}
    \vspace{-20pt}
    \caption{
Number of ineffective prompts (i.e., Solve-All or Solve-None) per batch.
MoPPS substantially reduces such prompts compared to uniform sampling, leading to more efficient training.
}
    \label{fig:databatch}
\end{figure*}

\subsubsection{Reduction of Ineffective Prompts.}
As noted in DAPO~\citep{yu2025dapo}, prompts that always succeed (Solve-All) or fail (Solve-None) lead to zero advantage in GRPO, resulting in no gradient for policy updates.
These prompts are therefore ineffective.  
DAPO mitigates this issue via dynamic sampling with hard evaluation, which is computationally expensive.  
In contrast, MoPPS amortizes this cost through lightweight posterior sampling.
To assess the effectiveness of MoPPS, we track the number of ineffective prompts per batch during training.  
As shown in Fig.~\ref{fig:databatch}, MoPPS significantly reduces the proportion of such prompts compared to uniform sampling, highlighting its benefit in improving training efficiency.

\begin{figure*}[h]
    \centering
    \includegraphics[width=\linewidth]{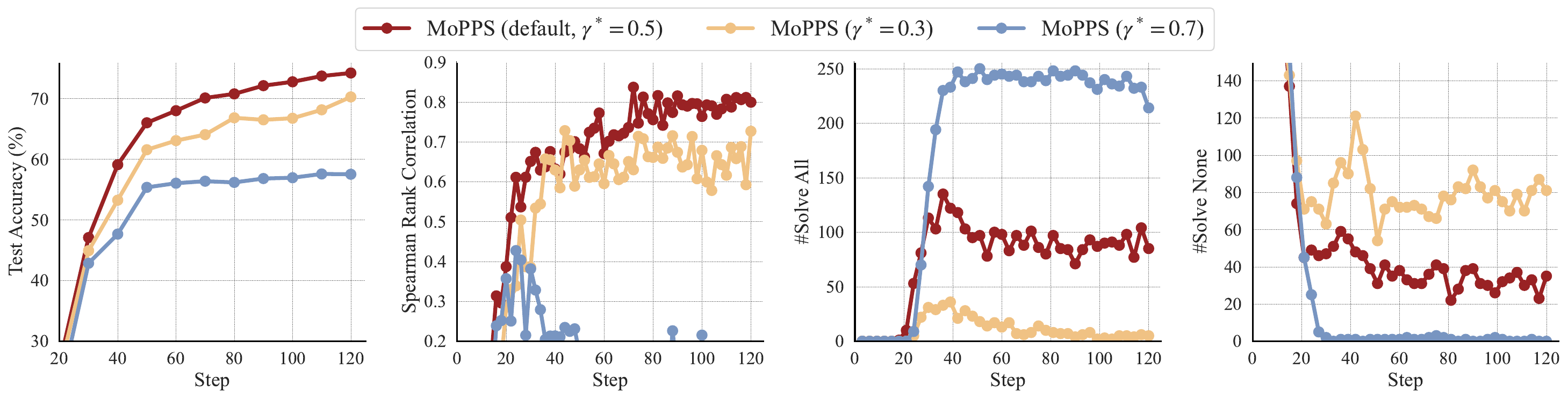}
    \vspace{-15pt}
    \caption{
Ablation study on target success rate $\gamma^*$. (a) Performance comparison under different $\gamma^*$ values on the Countdown task with Qwen2.5-3B. (b) Spearman rank correlation under different $\gamma^*$ values. (c,d) Number of ineffective prompts (i.e., Solve-All or Solve-None) per batch. These results support choosing intermediate success rates for stronger learning signals.
}
    \label{appfig:targetablate}
\end{figure*}
\subsubsection{Ablation Study on Target Success Rate.}
Prior studies~\citep{bae2025online, chen2025self} suggest that prompts with intermediate success rates provide stronger learning signals.
Based on this insight, we set the target success rate $\gamma^*$ to 0.5.
To empirically validate this choice, we conduct an ablation study on the Countdown task using Qwen2.5-3B.
As shown in Fig.~\ref{appfig:targetablate}(a,b), setting $\gamma^*$ to either 0.3 (favoring overly hard prompts) or 0.7 (favoring overly easy prompts) leads to degraded performance and less accurate difficulty predictions.
Examining the training batch composition in Fig.~\ref{appfig:targetablate}(c,d), we observe that both settings reduce the number of effective prompts. In particular, $\gamma^* = 0.7$ causes the batch to be dominated by Solve-All prompts, while $\gamma^* = 0.3$ overemphasizes unsolvable ones.
These results support the effectiveness of targeting prompts with intermediate success rates, i.e., $\gamma^* \approx 0.5$.
While adjusting $\gamma^*$ slightly around 0.5 may yield further gains, especially in the presence of estimation error, we leave the fine-tuning of this parameter to future work.

\begin{figure}[ht]
    \centering
    \includegraphics[width=\linewidth]{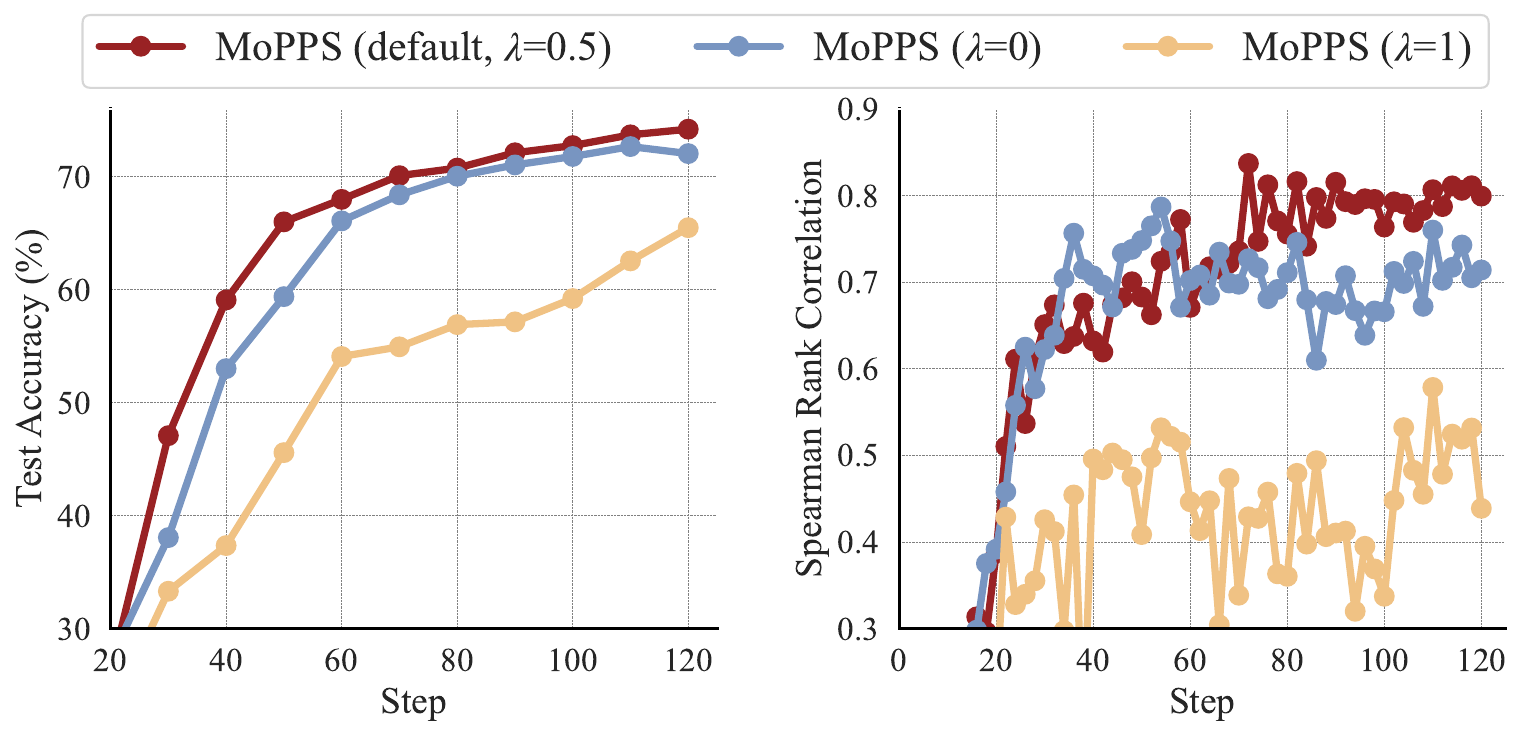}
    \vspace{-20pt}
    \caption{Ablation study on the temporal discounting strategy. We evaluate MoPPS under different $\lambda$ values, including disabling temporal discounting (MoPPS ($\lambda=1$)) and using only current feedback (MoPPS ($\lambda=0$)).}
    \vspace{-10pt}
    \label{fig:decayablate}
\end{figure}
\subsubsection{Ablation Study on Temporal Discounting.}\label{appsec:ablate}
To address nonstationarity during training, we introduce a temporal discounting strategy.
We conduct an ablation study on the Countdown task with Qwen2.5-3B to evaluate its effectiveness.
As shown in Fig.~\ref{fig:decayablate}, removing temporal discounting (\textit{MoPPS ($\lambda=1$)}) leads to a significant performance drop compared to the default MoPPS.
We also test an extreme setting where only current feedback is used (\textit{MoPPS ($\lambda=0$)}), which results in degraded performance and unstable estimation due to insufficient historical context.
In settings with weaker nonstationarity, where historical signals are more reliable, this extreme setup is expected to perform worse due to the lack of accumulated context.
Moreover, the Spearman correlation analysis shows that incorporating TD yields more reliable posterior estimation.
These results highlight the importance of temporal discounting for maintaining accuracy and robustness under non-stationary training dynamics.

\paragraph{Discussion.} 
Notably, the primary goal of this work is to demonstrate that prompt difficulty can be online predicted and leveraged for accelerated RL finetuning via appropriate selection strategies.
As such, the focus lies not on absolute performance tuning, but on validating the effectiveness of our predictive selection framework.
Even so, MoPPS already achieves strong and competitive results, matching Dynamic Sampling (DS) while requiring significantly fewer LLM rollouts.
The ablation studies further supports the soundness of the design and the proposed component.
We therefore leave finer hyperparameter tuning, e.g., the decay factor $\lambda$ and target success rate $\gamma^*$ to future work.
Moreover, in tasks like MATH where training dynamics are relatively stable, we disable discounting to simplify the implementation and tuning requirements.

\subsubsection{Ablation Study on Candidate Batch Size.}
We evaluate the effect of candidate batch size 
$\hat{\mathcal{B}}$ on Countdown, as shown in Table \ref{tab:candidate_size}.
The results show that performance improves as the candidate size increases, due to the expanded exploration space.
This is a key advantage of MoPPS: prior methods relying on real evaluations are limited by computational cost and cannot scale, whereas MoPPS uses lightweight prediction models to efficiently explore a much larger set of prompts. In practice, setting $\hat{\mathcal{B}}$ to the entire prompt pool is viable.

\begin{table}[t]
    \centering
    \caption{Performance with Different Candidate Batch Sizes on Countdown.}
    \label{tab:candidate_size}
    \renewcommand{\arraystretch}{1.2}
    \resizebox{\linewidth}{!}{
    \begin{tabular}{lcccc}
        \toprule
        \textbf{Candidate Size $\hat{\mathcal{B}}$} & \textbf{3B CD-34} & \textbf{3B CD-4} & \textbf{7B CD-34} & \textbf{7B CD-4} \\
        \midrule
        $1\times\mathcal{B}$ (Uniform) & 69.87 & 39.42 & 77.84 & 53.27 \\
        $2\times\mathcal{B}$  & 72.29 & 43.15 & 78.53 & 54.40 \\
        $4\times\mathcal{B}$  & 73.17 & 45.70 & 79.86 & 58.04 \\
        $\approx 8\times\mathcal{B}$ (Entire Pool) & \textbf{74.46} & \textbf{47.02} & \textbf{82.18} & \textbf{59.16} \\
        \bottomrule
    \end{tabular}
    }
\end{table}

\subsubsection{Response Length.}
Prior work~\citep{yu2025dapo} has shown that response length is closely tied to training stability and model performance.
Fig.~\ref{fig:responselength} tracks the mean response length during MATH training under different sampling strategies.
After an initial warm-up phase for posterior construction, MoPPS exhibits a similar trend to DS, i.e., generating consistently longer and more stable responses length than Uniform.
This partially explains their superior performance, as longer responses generally enable better exploration and more complex reasoning~\citep{yu2025dapo}.

\begin{figure}[h]
    \centering
    \includegraphics[width=\linewidth]{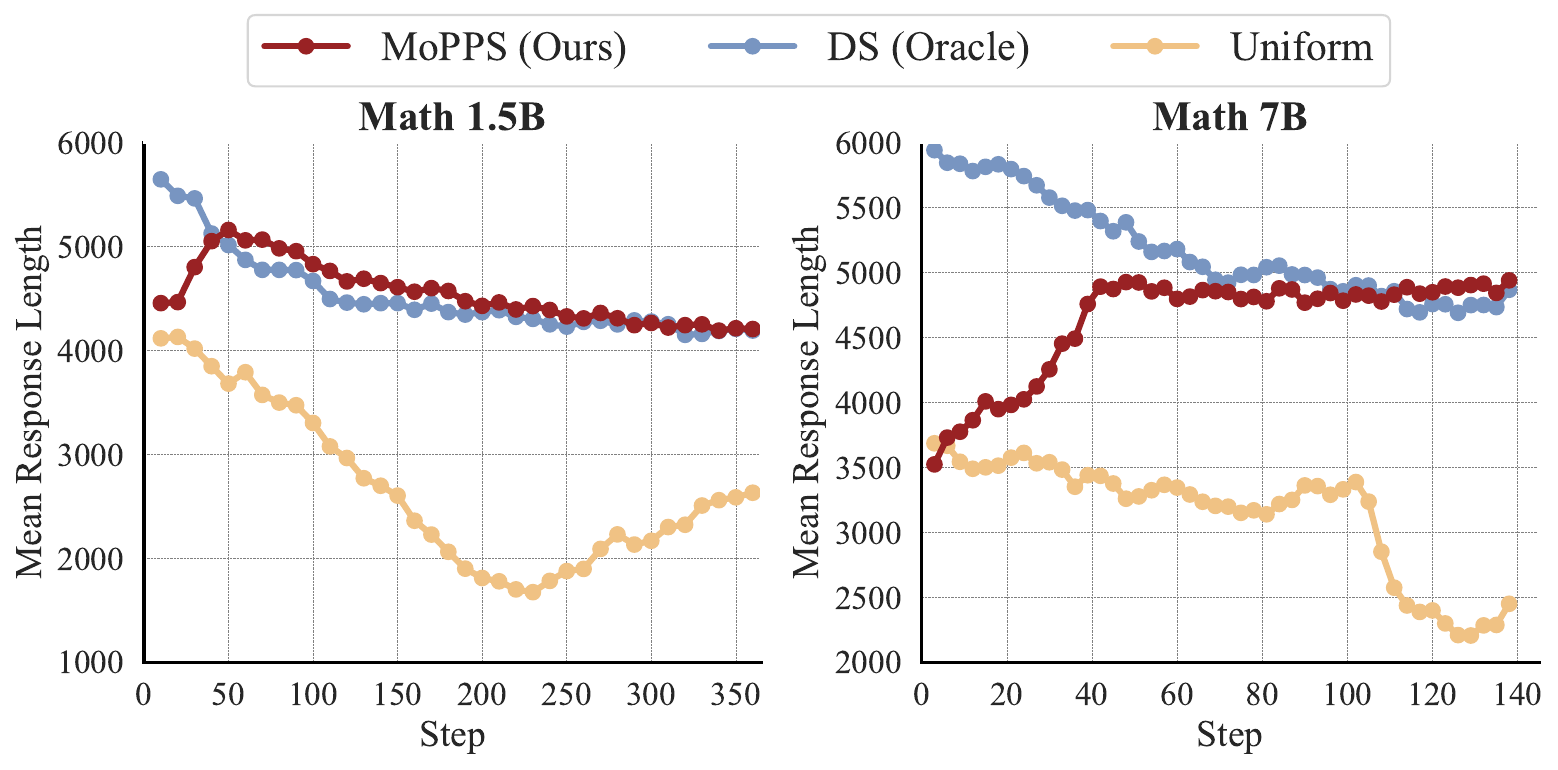}
    \vspace{-15pt}
    \caption{Mean response length during MATH training. MoPPS and DS both elicit longer responses than Uniform, explaining improved performance.}
    \label{fig:responselength}
\end{figure}

\subsubsection{Selected Prompt Length.}
We also analyze the average length of selected prompts during MATH training, as shown in Fig.~\ref{fig:promptlength}.
Both MoPPS and DS consistently prefer longer prompts compared to Uniform.
We hypothesize that longer prompts may be more complex with numerous conditions, encouraging the model to engage in deeper reasoning and reflection, which may lead to longer and more diverse responses~\citep{song2025fastcurl}.
A more detailed analysis of this observation is left for future work.
\begin{figure}[h]
    \centering
    \includegraphics[width=\linewidth]{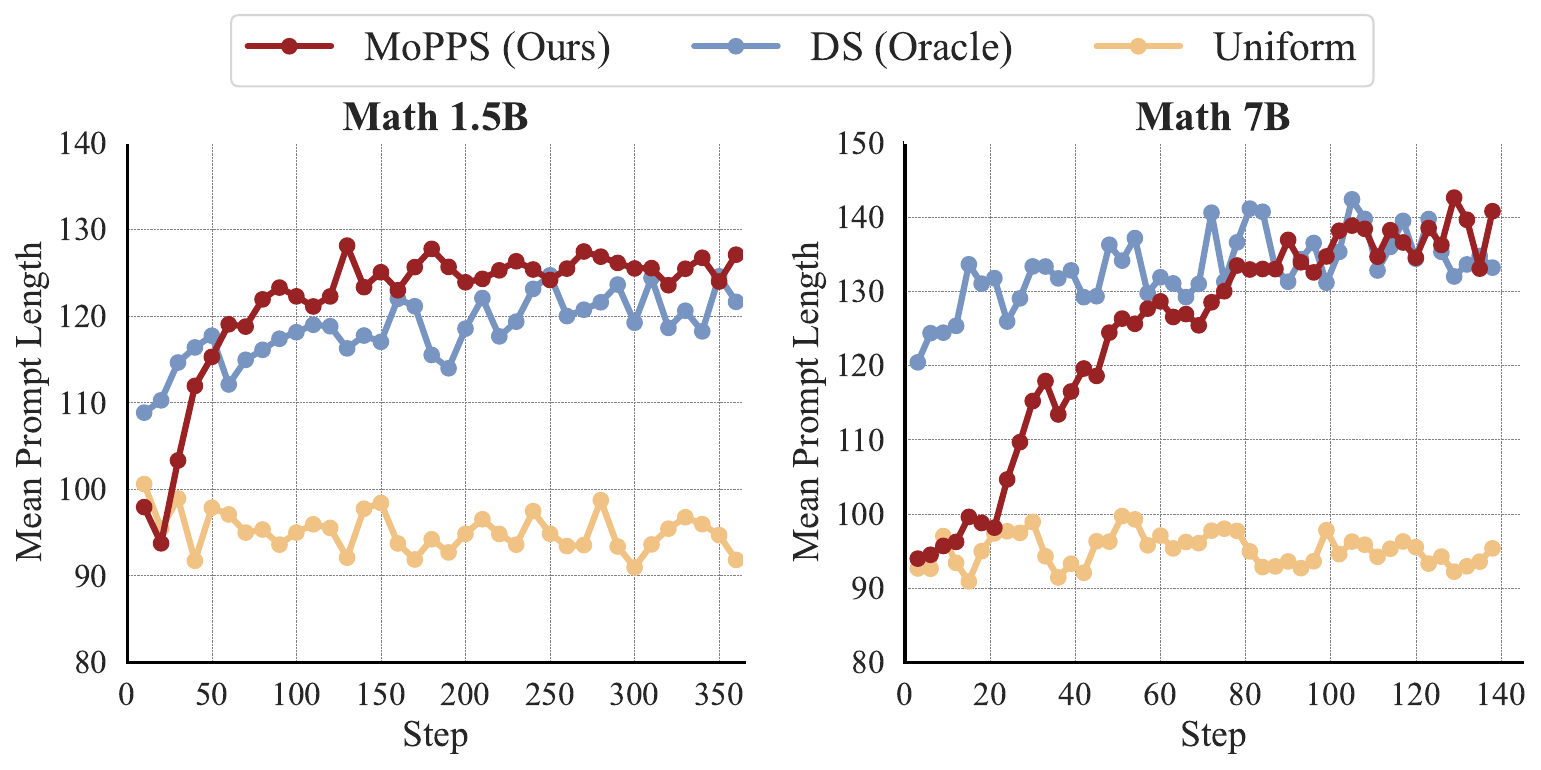}
    \vspace{-15pt}
    \caption{Mean prompt length during MATH training. MoPPS and DS tend to select longer prompts than Uniform.}
    \vspace{-10pt}
    \label{fig:promptlength}
\end{figure}

\section{Data Examples}
\label{appsec:dataexample}
The prompt templates for MATH and Geometry3k are adopted from the official verl framework, while the template for Countdown follows the format introduced in \cite{tinyzero}.

\begin{tcolorbox}[example, title=MATH example]
\textbf{Prompt:}

A rectangular band formation is a formation with $m$ band members in each of $r$ rows, where $m$ and $r$ are integers. A particular band has less than 100 band members. The director arranges them in a rectangular formation and finds that he has two members left over. If he increases the number of members in each row by 1 and reduces the number of rows by 2, there are exactly enough places in the new formation for each band member. What is the largest number of members the band could have? Let's think step by step and output the final answer within \texttt{\textbackslash boxed\{\}}.

\textbf{Ground-Truth Answer:}

98
\end{tcolorbox}
\begin{tcolorbox}[example, title=Countdown example]
\textbf{Prompt:}

A conversation between User and Assistant. The user asks a question, and the Assistant solves it. The assistant first thinks about the reasoning process in the mind and then provides the user with the answer.

User: Using the numbers [44, 19, 35], create an equation that equals 98. You can use basic arithmetic operations (+, -, *, /) and each number can only be used once. Show your work in $<$think$>\ </$think$>$ tags. And return the final answer in $<$answer$>\ </$answer$>$ tags, for example $<$answer$> (1 + 2) / 3 <$/answer$>$.

Assistant: Let me solve this step by step.

$<$think$>$

\textbf{Ground-Truth Answer:}

[44, 19, 35]
\end{tcolorbox}
\begin{tcolorbox}[example, title=Geometry3k example]
\textbf{Prompt:}

\includegraphics[width=0.2\linewidth]{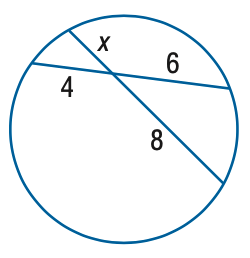}
Find x. You FIRST think about the reasoning process as an internal monologue and then provide the final answer. The reasoning process MUST BE enclosed within $<$think$>\ </$think$>$ tags. The final answer MUST BE put in \texttt{\textbackslash boxed\{\}}.

\textbf{Ground-Truth Answer:}

3
\end{tcolorbox}

\end{document}